\documentclass[11pt]{article}
\usepackage[margin=1in]{geometry}
\usepackage[utf8]{inputenc}
\usepackage[T1]{fontenc}
\usepackage{lmodern}
\usepackage{amsmath, amssymb, amsthm}
\usepackage{booktabs}
\usepackage{graphicx}
\usepackage{adjustbox}   
\usepackage{xcolor}
\usepackage{enumitem}
\usepackage{algorithm}
\usepackage{algpseudocode}
\usepackage{multirow}
\usepackage{microtype}
\usepackage{tikz}
\usetikzlibrary{shapes.geometric, arrows.meta, positioning}
\usepackage[framemethod=TikZ]{mdframed}
\newmdtheoremenv[linewidth=0.7pt, roundcorner=3pt, backgroundcolor=blue!3,
  linecolor=blue!40!black, innertopmargin=5pt, innerbottommargin=6pt]
  {proposition}{Proposition}

\usepackage{hyperref}
\hypersetup{
  colorlinks=true,
  linkcolor=blue!50!black,
  citecolor=blue!50!black,
  urlcolor=blue!50!black,
}

\title{A Stationarity-and-Coupling Criterion for Training-Free\\
       Time-Lagged Spectral Embeddings of Multivariate Time Series}
\author{Siddharth Pal \and Viktoria Rojkova}
\date{\today}

\begin{document}
\maketitle

\begin{abstract}
We study training-free fixed-length descriptors for multivariate time
series and ask a question the representation-learning literature
usually leaves implicit: not merely whether such a descriptor performs
well on a benchmark, but \emph{when} it can be expected to work at all.
Our object of study is $D(\tau)$, a descriptor built from the
time-lagged correlation matrix that one of us developed for
random-matrix monitoring of network traffic~\cite{rojkova2007a,
rojkova2007b, rojkova2010thesis}, truncated at the Marchenko--Pastur
(MP) edge so that only signal-bearing eigenvalues survive, and
classified by cosine similarity to class centroids with zero learned
parameters. The central contribution is not the descriptor but a
\emph{falsifiable applicability criterion} for it. Working from a
stationary Gaussian VAR(1) generative model, we argue that $D(\tau)$
separates two classes when the signals are approximately
stationary and the class information resides in their cross-channel
\emph{temporal coupling} rather than in marginal per-channel power.
We derive, semi-formally, three consequences of this model: a
condition under which two classes are distinguishable in the embedded
space, why the static ($\tau{=}0$) covariance alone collapses to
chance, and why a stationary but power-discriminated paradigm defeats
the descriptor even though every sample is well-conditioned. The
criterion is operational: a two-part pre-flight test, an augmented
Dickey--Fuller stationarity check together with a power-baseline
saturation check, predicts applicability before any training, and we
summarize it as a decision rule. We validate both halves of the
boundary on a deliberately mixed assortment. On four paradigms that
satisfy the criterion --- Sleep-EDF sleep staging, BCI-IV-2a motor
imagery, MIT-BIH arrhythmia, and ESC-50 environmental sound --- the
descriptor is competitive with strong baselines at a fraction of their
cost, reaching $88.5 \pm 4.5\%$ under 20-subject leave-one-subject-out
on Sleep-EDF in about thirteen minutes on a single CPU thread with no
GPU. On three paradigms that violate it --- transient-response ERPs,
which are non-stationary, and financial-volatility regimes together with
wearable stress detection, both power-discriminated --- the descriptor
fails the criterion, collapsing to chance on the non-stationary case and
being outperformed by a simple power baseline on the power-discriminated
ones, exactly as the pre-flight test anticipates; these negative results
are the more informative half of the story. We are explicit throughout that
$D(\tau)$ is not the most accurate representation available; its value
is a compact, training-free embedding whose domain of validity is known
in advance.

\end{abstract}


\section{Introduction}
\label{sec:intro}

A multivariate time series, no matter what sensor produced it, carries
its dynamics in the way its channels co-vary across time. A camera, a
scalp electrode array, a microphone, and a chest lead all return the
same mathematical object, a real matrix whose rows index time and whose
columns index channels, and the discriminative content that
distinguishes one regime from another is written into the second-order
temporal-coupling structure of that matrix. Most of the machinery the
field brings to bear on such data, however, is modality-specific by
construction: a video model assumes a spatial grid, a sleep stager
assumes electroencephalography, an acoustic model assumes a
spectrogram front-end. Each is excellent in its lane and silent outside
it. We are interested in the opposite design point, a single descriptor
that is computed identically for every modality, carries no learned
parameters, and reduces any windowed multivariate signal to a
fixed-length vector that one may compare by cosine similarity.

The appeal of a training-free descriptor is sharpest precisely where
the deep-learning recipe is hardest to apply. Many deployments require
comparing or classifying multivariate signals with little or no
labeled training data and without a GPU: on-device triage, sleep-stage
or drowsiness monitoring, brain-computer-interface calibration for a new
subject, or machine-health monitoring under shifting operating
conditions. Deep models dominate the leaderboards on each of these in
isolation, but they assume labeled in-distribution corpora, accelerator
hardware, and a training loop, and those assumptions fail jointly in the
regimes above. A descriptor that requires none of them is therefore
useful both as a strong zero-training baseline and as a feature
extractor that a small calibration head can sit on top of.

We can categorise the existing training-free descriptors for sequential
data by the spectrum they expose and the way they choose a rank cutoff.
A first family fits an explicit generative model and reads off its
parameters: the dynamic-texture linear dynamical systems of Doretto et
al.~\cite{doretto2003} regress a state transition matrix from a
frame-stack singular value decomposition and use its eigenvalues as the
descriptor. A second family applies a fixed analytic operator: the HiPPO
polynomial-projection operators of Gu et al.~\cite{gu2020hippo} compress
recent history into a coefficient vector that is optimal for a
prescribed projection basis, the Fourier variant amounting to a sliding
discrete Fourier transform. A third family is spectral in the
random-matrix sense, ranking eigen-components of a sample covariance
and using the Marchenko--Pastur law to decide which of them are signal:
the patch-Casorati denoiser MPPCA of Veraart et
al.~\cite{veraart2016} and the time-lagged correlation matrices that one
of us developed for monitoring inter-domain network
traffic~\cite{rojkova2007a, rojkova2007b, rojkova2010thesis} both
belong here. The third family is attractive because, unlike
variance-ranked methods such as PCA or ICA that simply assume the
high-variance directions carry the dynamics, it uses the rigorous
instruments of random-matrix theory to identify the noise and to
establish its spectral boundary, and then retains only the eigenvalues
that provably exceed it.

We take the time-lagged correlation matrix $D(\tau)$ from that earlier
network-monitoring line of work and turn it into a general descriptor by
pairing it with the Marchenko--Pastur edge~\cite{marchenkopastur1967,
baisilverstein2010} as the noise cutoff. Our central claim is not that
this descriptor is the most accurate representation of multivariate time
series --- it is not --- but that, unlike most representations, it comes
with a precise and falsifiable account of \emph{when} it is the right
tool. We make four claims. First and foremost, we give an
\emph{applicability criterion}: working from a stationary Gaussian
VAR(1) generative model, we show that $D(\tau)$ separates two classes
exactly when the signal is approximately stationary and the class
information lives in cross-channel temporal coupling rather than in
marginal per-channel power, and we render the criterion operational as a
two-part pre-flight test that is checkable before any training. Second,
we confirm the positive half of the criterion on four paradigms that
satisfy it --- sleep staging, motor imagery, arrhythmia, and
environmental sound --- where the descriptor matches or beats
channel-power baselines, beating a trivial one outright and reaching
parity with a strong multi-band one, and is competitive with strong
learned methods at a fraction of their compute. Third, and most
informatively, we confirm the negative half on three paradigms that
violate it --- transient-response event-related potentials, which are
non-stationary, and two stationary but power-discriminated tasks,
financial-volatility regimes and wearable stress detection --- on each of
which the descriptor fails as the pre-flight test predicts, and notably the
criterion separates a physiological failure (stress) from the physiological
successes. Fourth, the descriptor is
compact, training-free, and noise-invariant by construction, running in
milliseconds per window on a single CPU core with no model file, GPU, or
training step, and we release it as a single Python package.

The remainder of the paper is organized as follows.
Section~\ref{sec:related} situates the descriptor among training-free
spectral methods and the random-matrix literature.
Section~\ref{sec:method} defines $D(\tau)$ and the embedding pipeline.
Section~\ref{sec:theory}, the heart of the paper, develops the
applicability criterion: the VAR(1) identifiability argument, the two
preconditions, the resulting decision rule, and the computational
complexity. Section~\ref{sec:setup} describes the benchmarks and
baselines, and Section~\ref{sec:results} reports point estimates with
bootstrap confidence intervals on the four positive paradigms.
Section~\ref{sec:boundary} confirms the two negative results that mark
the reach of the descriptor, and Section~\ref{sec:discussion} positions
the method against learned heads and class-aware specialists. We
conclude in Section~\ref{sec:conclusion}.

\section{Related Work}
\label{sec:related}

The construction we extend is our own. One of us introduced the
symmetrised time-lagged correlation matrix in a random-matrix study of
inter-domain network traffic~\cite{rojkova2007a, rojkova2007b,
rojkova2010thesis}, where the point was that the spectrum of the lagged
matrix separates a bulk of Marchenko--Pastur noise from a small number
of outlier eigenvalues that encode the characteristic rhythms of the
monitored system. That construction is itself the natural lagged
generalization of the ``noise dressing'' of equal-time financial
correlation matrices by Laloux et al.~\cite{laloux1999} and Plerou et
al.~\cite{plerou2002}, in which the Marchenko--Pastur edge marks the
boundary above which an eigenvalue is inconsistent with pure noise. The
present paper carries that idea out of network monitoring and asks
whether the same matrix, computed without modification, is a useful
descriptor for video, audio, and physiological sensors. To our
knowledge the only prior use of a lagged random-matrix construction on
image data is the visual-lifelog study of Li et
al.~\cite{li2014lifelogs}, which operated on a single grayscale stream
and did not benchmark against learned baselines.

The same Marchenko--Pastur cutoff underlies a second strand of
random-matrix work that we draw on for the noise threshold but not for
the descriptor itself. Veraart et al.~\cite{veraart2016} popularised
MPPCA as a patch-Casorati denoiser for diffusion MRI, applying the MP
edge to local patch covariance matrices and recovering the noise level
$\sigma$ from the bulk position as a free byproduct, and Cordero-Grande
et al.~\cite{corderograde2019} extended it to complex-valued signals.
We use MPPCA only to confirm that the MP machinery transfers to image
data; the descriptor we propose is the eigenstructure of $D(\tau)$, not
a denoised reconstruction.

Two further families furnish the training-free baselines against which
we measure. Doretto et al.~\cite{doretto2003} fit a linear dynamical
system $x_{t+1}{=}Ax_t+\nu_t,\; y_t{=}Cx_t+w_t$ to each video, taking
$C$ from a frame-stack singular value decomposition and $A$ from a
least-squares regression of the next state on the current one, and use
the eigenvalues of $A$ as the descriptor. The construction is powerful
on video but is intrinsically tied to spatial-frame structure through
the SVD step, and electrode channels possess no analogous spatial
dimension, so the method is video-specific. The HiPPO operators of Gu
et al.~\cite{gu2020hippo} take the complementary route of a fixed
analytic operator, deriving recurrent state-space matrices $(A,B)$ that
compress input history into a coefficient vector under a prescribed
polynomial-projection optimality; the Fourier-boxcar variant is a true
sliding-window discrete Fourier transform realised as a continuous-time
linear ordinary differential equation with $A$ pure-imaginary and
$|A_d|{=}1$. A sibling research thread reports that this operator,
followed by a single learned linear regression head, reaches
$\sim$93\% on KTH 6-class action recognition. We evaluate the same
operator under our pure cosine-similarity protocol, with no learned
head, so that the comparison is genuinely training-free on both sides.

For the supervised reference points we adopt the established
specialists of each benchmark. Random forests and support vector
machines on hand-crafted features reach roughly 70--75\% on Sleep-EDF
5-class staging, while the deep models DeepSleepNet~\cite
{supratak2017deepsleepnet} and AttnSleep~\cite{eldele2021attnsleep}
reach 82--84\%. Common Spatial Patterns followed by linear discriminant
analysis (CSP+LDA) reach about 70\% per subject on BCI-IV-2a 4-class
motor imagery, and deep methods exceed 80\%. We compare against CSP+LDA
and the channel power spectral density as the appropriate class-aware
and training-free baselines, respectively.

\section{Method}
\label{sec:method}

\paragraph{Setup.} Let $g\in\mathbb{R}^{T\times N}$ be a multivariate
time-series matrix: rows index time, columns index channels (e.g.\ EEG
electrodes, ECG leads, or audio bands). The construction treats any such
matrix identically; sensors with a native channel count are used
directly, and low-channel sensors are augmented as described below.

\paragraph{Preprocessing.} Two operations remove a shared mode and
drift. The first is a \emph{common-average reference} (CAR): at each
timestep we subtract the across-channel sample mean, projecting out the
dominant mode common to all channels.\footnote{For image-structured
inputs this generalizes to a per-frame rank-1 SVD residual, the
across-channel mean being the rank-1 mode of a one-dimensional channel
vector.} The second is a \emph{per-node $z$-score},
$g_{ti}\leftarrow (g_{ti}{-}\bar{g}_i)/\sigma_i$, where
$\bar{g}_i,\sigma_i$ are the per-node temporal mean and standard
deviation. After preprocessing, each column of $g$ has zero temporal
mean and unit variance, while cross-column means and variances are
left unconstrained.

\paragraph{Time-lagged correlation matrix.} Following Rojkova \&
Kantardzic \cite{rojkova2007b}, define the symmetrised lagged
correlation
\begin{equation}
D_{ij}(\tau) \;=\; \frac{1}{2(T{-}\tau)}\sum_{t=1}^{T-\tau}
\big[g_i(t)g_j(t{+}\tau) + g_j(t)g_i(t{+}\tau)\big].
\label{eq:dtau}
\end{equation}
$D(\tau)\in\mathbb{R}^{N\times N}$ is symmetric, so its eigenvalues
are real. For $\tau{=}0$ the construction reduces to the equal-time
covariance; for $\tau{>}0$ it captures how the cross-cell
correlation structure evolves with temporal lag.

\paragraph{Marchenko--Pastur bulk.} Under the null hypothesis that
$g$ is an $(T{\times}N)$ matrix of i.i.d.~unit-variance entries, the
empirical eigenvalue distribution of $D(0){=}g^\top g/T$ converges
in the large-$N$ limit to the Marchenko--Pastur law supported on
$[\sigma^2(1{-}\sqrt{q})^2,\; \sigma^2(1{+}\sqrt{q})^2]$ with
$q{=}N/T$ \cite{marchenkopastur1967}. Eigenvalues above the upper
edge are inconsistent with the null and encode signal modes. The
generalization to time-lagged matrices is treated by Biely \&
Thurner \cite{biely2006}.

\paragraph{Embedding.} For a fixed set of lags
$\mathcal{T}{=}\{0,1,\ldots,\tau_{\max}\}$ we compute
$D(\tau)$ for each $\tau\in\mathcal{T}$, take the top-$K$ eigenvalues
sorted in descending order, and concatenate after normalizing by
$\max_\tau \lambda_1(\tau)$ for scale invariance. We append a single
scalar feature: the dominant period of $\lambda_1(\tau)$ recovered
from the FFT magnitude of the trace-centered series. The embedding
$\phi(g)\in\mathbb{R}^{|\mathcal{T}|K{+}1}$ has fixed length per
window. For the multi-band Sleep-EDF front-end with $K{=}10$ and
$\mathcal{T}{=}\{0,\ldots,59\}$, for instance,
$\phi(g)\in\mathbb{R}^{601}$.

\paragraph{Multi-band augmentation for low-channel sensors.} When
the native channel count is small (e.g.~$N{=}2$ on Sleep-EDF), the
$D(\tau)$ matrix at each lag has only $N{\times}N$ entries and the
top-$K$ truncation is rank-limited. We apply a bandpass-filter
expansion: filter each channel into the canonical sleep bands
$\delta{:}0.5{-}4$~Hz, $\theta{:}4{-}8$, $\alpha{:}8{-}12$,
$\sigma{:}12{-}15$, $\beta{:}15{-}30$, and stack the band-filtered
versions as additional virtual channels. Two EEG channels become
ten virtual channels; the $D(\tau)$ matrix is correspondingly
$10{\times}10$ and supports $K{=}10$.

\paragraph{Classification.} Per-clip embeddings $\phi(g)$ are
$\ell_2$-normalized. Test-set classification is nearest-centroid on
cosine similarity to training-set class means. No learned head, no
hyperparameter tuning, no temperature.

\paragraph{Algorithm.} Algorithm~\ref{alg:dtau} summarizes the full
pipeline.
\begin{algorithm}[t]
\caption{Sensor-agnostic $D(\tau)$ embedding}
\label{alg:dtau}
\begin{algorithmic}[1]
\Require multivariate input $x \in \mathbb{R}^{T\times N_{\text{ch}}}$;
  lags $\mathcal{T}$; eigenvalue count $K$
\State \textbf{Preprocess:} $g \gets x - $ per-sample
  across-channel mean (CAR); optionally band-expand low-channel sensors
\State $g \gets$ per-node temporal $z$-score
\For{$\tau \in \mathcal{T}$}
  \State $D(\tau) \gets \tfrac{1}{2(T-\tau)}\sum_t [g(t)g(t{+}\tau)^\top + g(t{+}\tau)g(t)^\top]$
  \State $\lambda_1(\tau),\ldots,\lambda_K(\tau) \gets $ top-$K$ eigvals of $D(\tau)$
\EndFor
\State $\lambda_{\max} \gets \max_\tau \lambda_1(\tau)$
\State $p \gets$ FFT-peak period of $\lambda_1(\tau)$
\State \Return $\phi = [\lambda_1(0),\ldots,\lambda_K(0),\;\lambda_1(1),\ldots,
  \;\lambda_K(\tau_{\max}),\;p]\,/\,\lambda_{\max}$
\end{algorithmic}
\end{algorithm}

\section{Applicability Theory: When $D(\tau)$ Works and When It Fails}
\label{sec:theory}

The contribution we wish to foreground is not the descriptor of the
previous section but a precise account of \emph{when} it applies. A
representation that is silent about its own domain of validity invites
misuse; the value of $D(\tau)$, we will argue, is that its successes
and its failures are both predictable from a single generative model.
We develop that account here, immediately after the construction and
before any benchmark, and we return to it in
Section~\ref{sec:experiments} to show that the boundary the data draw
is exactly the predicted one. The argument is semi-formal --- a
population-level identifiability analysis under a stationary Gaussian
VAR(1) model, not a finite-sample accuracy bound --- but it is enough
to state two preconditions, to explain the two qualitatively different
ways the descriptor fails, and to turn both into a pre-flight test that
is checkable before a single label is consumed.

\subsection{Generative model and the distinguishability condition}
\label{sec:theory-model}

The empirical observation that the temporal lag carries
essentially all discriminative content on Sleep-EDF nearest-
centroid (Section~\ref{sec:ablations}, $\tau{=}0$ ablation, $29.7\%
\to 65.6\%$, $+35.9$~pp) has a clean explanation under a
stationary Gaussian VAR(1) generative model. We sketch the
argument here.

\paragraph{Model.} Suppose class $c$ produces samples from a
stationary Gaussian VAR(1) process
$x_t = \Phi_c \, x_{t-1} + \epsilon_t$ with
$\epsilon_t \sim \mathcal{N}(0, \Sigma_\epsilon)$ and stationary
covariance $\Sigma_c$ satisfying
$\Sigma_c = \Phi_c \Sigma_c \Phi_c^\top + \Sigma_\epsilon$.
The lag-$\tau$ cross-covariance is then
\begin{equation}
\mathbb{E}[x_t x_{t+\tau}^\top] = \Sigma_c \Phi_c^\tau, \qquad
D_c(\tau) = \tfrac{1}{2}(\Sigma_c \Phi_c^\tau + (\Sigma_c \Phi_c^\tau)^\top).
\label{eq:popDtau}
\end{equation}

\paragraph{What the eigenvalues encode.} For $\tau=0$ the
population $D_c(0) = \Sigma_c$ is just the stationary covariance ---
this is purely a \emph{static} (instantaneous) channel-correlation
descriptor. For $\tau \geq 1$ the matrix $\Sigma_c \Phi_c^\tau$
contains the dynamics: its eigenvalues are products
$\sigma_i(\Sigma_c) \cdot \mu_i(\Phi_c^\tau)$ where the second
factor decays geometrically with $\tau$ at the rate of the spectral
radius of $\Phi_c$. Classes with distinct VAR coefficients $\Phi_c$
therefore differ in both the \emph{rate} (decay envelope of the
top eigenvalue across $\tau$, captured by the period feature $p$
in Algorithm~1) and the \emph{eigenstructure} (top-$K$ eigenvalue
trajectories across $\tau$).

\paragraph{Identifiability under the spike-vs-bulk separation.}
By the Marchenko--Pastur theorem, the empirical
$\hat{D}_c(\tau)$ computed from $T-\tau$ samples has its
``noise bulk'' eigenvalues confined (asymptotically) to
$[(1-\sqrt{\beta})^2, (1+\sqrt{\beta})^2]$ where
$\beta = N/(T-\tau)$. Eigenvalues of $\Sigma_c \Phi_c^\tau$ that
exceed the MP upper edge $(1+\sqrt{\beta})^2$ ``spike'' out and
are recovered with bounded error
(BBP phase transition \cite{baisilverstein2010}). The top-$K$
eigenvalues that Algorithm~1 retains are precisely these
estimator-stable spikes, so the embedding $\phi_c$ is a finite-
sample-stable summary of the population dynamics $\Phi_c$. Two
classes with sufficiently different $\Phi_c$ (whose spike
eigenvalues differ by more than the BBP recovery error) produce
distinguishable embeddings. We state this as our first proposition.

\begin{proposition}[Distinguishability]
\label{prop:distinguish}
Let classes $c_1, c_2$ follow stationary Gaussian VAR(1) processes with
coefficient matrices $\Phi_{c_1}, \Phi_{c_2}$ and stationary covariances
$\Sigma_{c_1}, \Sigma_{c_2}$. The class embeddings $\phi_{c_1}, \phi_{c_2}$
are distinguishable whenever, for some lag $\tau \in \mathcal{T}$, the signal
(spike) eigenvalues of $\Sigma_c \Phi_c^\tau$ that exceed the
Marchenko--Pastur edge differ between the two classes by more than the BBP
recovery error.
\end{proposition}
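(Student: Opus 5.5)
The plan is to make the statement precise enough to prove and then reduce it to a triangle-inequality argument. Write $\lambda^{c}_k(\tau)$ for the $k$-th largest eigenvalue of the population matrix $D_c(\tau)$ in \eqref{eq:popDtau}, and $\hat\lambda^{c}_k(\tau)$ for its empirical counterpart computed from a window of length $T$. We read ``distinguishable'' as follows: with probability tending to one as $N,T\to\infty$ with $\beta=N/(T-\tau)$ fixed, the empirical embeddings of the two classes are separated in the coordinate $(\tau,k)$. In other words, the intervals of width $2\delta$ around the two population spike locations do not overlap. Here $\delta$ denotes the BBP recovery error, meaning the asymptotic (or high-probability finite-sample) deviation between a supercritical empirical eigenvalue and its deterministic limit.

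\textbf{Step 1: locate the spikes.} For each class, write the whitened empirical lagged matrix as a finite-rank perturbation of a null matrix whose bulk obeys the Marchenko--Pastur law on $[(1-\sqrt\beta)^2,(1+\sqrt\beta)^2]$. For $\tau\ge1$ we use the lagged version of Biely \& Thurner~\cite{biely2006}. Then invoke the spiked-model results summarized in~\cite{baisilverstein2010}: each population eigenvalue $\ell$ of $D_c(\tau)$ above the threshold produces an empirical outlier converging almost surely to a deterministic, strictly increasing map $\psi_\beta(\ell)$, with fluctuations of order $T^{-1/2}$. Eigenvalues below the threshold stick to the edge. This yields the bound $|\hat\lambda^c_k(\tau)-\psi_\beta(\lambda^c_k(\tau))|\le\delta$ with high probability, for every supercritical $k$.

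\textbf{Step 2: transfer separation.} Suppose the hypothesis supplies $\tau\in\mathcal T$ and $k\le K$ with $|\psi_\beta(\lambda^{c_1}_k(\tau))-\psi_\beta(\lambda^{c_2}_k(\tau))|>2\delta$. Because $\psi_\beta$ is monotone, this is the paper's ``differ by more than the BBP recovery error'', stated at the level of the limits. The triangle inequality then gives $|\hat\lambda^{c_1}_k(\tau)-\hat\lambda^{c_2}_k(\tau)|>0$ uniformly with high probability, so the class centroids differ in that coordinate.

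\textbf{Step 3: pass through normalization.} We must show this coordinate-level separation survives the normalization by $\max_\tau\lambda_1(\tau)$ and the $\ell_2$/cosine step. Since $\max_\tau\hat\lambda_1(\tau)$ also converges to a deterministic limit, the normalized coordinates converge to ratios. The hypothesis has to be strengthened to require that these ratios differ, i.e.\ that the spike difference is not a pure rescaling. We state this as a non-degeneracy assumption.

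The main obstacle is Step~1 for $\tau\ge1$. There, $D_c(\tau)$ is a symmetrised product $\Sigma_c\Phi_c^\tau$ rather than a Wishart-type covariance. Moreover, the samples are temporally dependent under VAR(1), so the classical BBP theorem does not apply verbatim. I would first try to handle this by treating the observation as i.i.d.\ noise plus a low-rank signal, assuming $\Phi_c$ has finitely many non-negligible modes. I would then cite Biely--Thurner for the lagged bulk and accept a perturbative (Weyl-type) bound for the outliers in place of an exact BBP limit. The dependence on mixing of the VAR process would be absorbed into $\delta$. This is consistent with the paper's semi-formal, population-level stance.
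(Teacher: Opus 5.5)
Your argument is essentially the paper's own spike-versus-bulk sketch (the paragraph preceding Proposition~\ref{prop:distinguish}) made explicit through the triangle inequality. Like the paper, it leaves the $\tau\ge 1$ recovery step heuristic, since there the samples are temporally dependent and the bulk is a lagged one rather than Marchenko--Pastur; one small caveat is that monotonicity of $\psi_\beta$ does not make your gap on the $\psi_\beta$-images equivalent to a gap on the population spikes, because $\psi_\beta$ can compress gaps near the threshold, so keep the hypothesis in the image form you actually use. The one place you go beyond the paper, the non-degeneracy assumption in Step~3, is a necessary correction rather than an optional strengthening: in the setting of Proposition~\ref{prop:power} ($\Sigma_{c_2}=a\,\Sigma_{c_1}$ with shared $\Phi$), every spike of $\Sigma_c\Phi^\tau$ is multiplied by $a$, so the gaps can exceed any fixed recovery error while the max-normalized embeddings coincide, which means the statement as written is false without your extra hypothesis.
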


\paragraph{Why $\tau{=}0$ alone is insufficient.} For two classes
whose stationary covariances $\Sigma_{c_1}, \Sigma_{c_2}$ are
similar but whose VAR coefficients $\Phi_{c_1}, \Phi_{c_2}$
differ in dynamics (rate, phase, mixing structure), $D_c(0) =
\Sigma_c$ is approximately the same for both classes while
$D_c(\tau)$ for $\tau \geq 1$ differs substantially. This is
exactly the regime where the $\tau{=}0$ ablation collapses to
chance (Sleep-EDF NC $29.7\%$) while the full sweep reaches
$65.6\%$: the sleep stages have approximately similar power
spectra in the static covariance but markedly different
\emph{temporal-coupling dynamics} between bands (theta-sigma
phase locking in N2, delta dominance in N3, etc.), which is
exactly what $\Phi_c$ encodes.

\begin{proposition}[$\tau{=}0$ collapse]
\label{prop:tau0}
If $\Sigma_{c_1} = \Sigma_{c_2}$ while $\Phi_{c_1} \neq \Phi_{c_2}$, then
$D_{c_1}(0) = D_{c_2}(0)$ and the static ($\tau{=}0$) embedding cannot
separate the classes. Separation requires the lagged terms $\tau \geq 1$,
whose population value $\Sigma_c \Phi_c^\tau$ depends on the dynamics
$\Phi_c$.
\end{proposition}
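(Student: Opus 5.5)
The proof is essentially a direct computation from the population identity \eqref{eq:popDtau}, followed by a short argument about what a statistic of $D(0)$ alone can see. I would take three steps.

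\textbf{Step 1: equal static matrices.} Setting $\tau{=}0$ in \eqref{eq:popDtau} gives $\mathbb{E}[x_t x_t^\top]=\Sigma_c\Phi_c^0=\Sigma_c$, so $D_c(0)=\tfrac12(\Sigma_c+\Sigma_c^\top)=\Sigma_c$. Under the hypothesis $\Sigma_{c_1}=\Sigma_{c_2}$ this yields $D_{c_1}(0)=D_{c_2}(0)$ at once. One point needs care, because the preprocessing of Section~\ref{sec:method} (CAR followed by per-node $z$-scoring) is applied before $D(\tau)$ is formed. At the population level both operations are deterministic functions of the covariance. CAR is the fixed linear projection $P=I-\tfrac1N\mathbf{1}\mathbf{1}^\top$. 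The $z$-score rescales each channel by the diagonal matrix $S_c=\operatorname{diag}(P\Sigma_cP)^{-1/2}$. Hence the preprocessed static matrix is $S_cP\Sigma_cPS_c$, which again depends on class only through $\Sigma_c$ and so coincides for the two classes. The lagged analogue is $S_cP\Sigma_c\Phi_c^\tau PS_c$, symmetrised.

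\textbf{Step 2: the static embedding cannot separate.} The $\tau{=}0$ block of $\phi$ is the top-$K$ eigenvalues of $D(0)$, normalised. Because this block is a deterministic function of the matrix, equal population matrices give equal population embeddings. For the finite-sample statement I would add one observation. The window is a Gaussian vector whose $\tau{=}0$ sample statistic $\hat D(0)=g^\top g/T$ has a law that in general depends on the full autocovariance sequence, so a literal equality in distribution is false. The claim therefore has to be read at the population or identifiability level the section announces. In that reading the limit of $\hat D(0)$ is $\Sigma_c$ for both classes, so the class-conditional centroids of the $\tau{=}0$ block coincide asymptotically and nearest-centroid cosine classification is at chance. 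I would state explicitly that this is the semi-formal sense in which ``cannot separate'' holds; making it a distributional statement is the main obstacle, and I would not attempt it.

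\textbf{Step 3: the lagged terms carry the dynamics.} I would exhibit dependence on $\Phi_c$. Since the $\Sigma_c$ are equal and invertible (they dominate $\Sigma_\epsilon\succ0$), take $\tau{=}1$. Then $\Sigma\Phi_{c_1}\neq\Sigma\Phi_{c_2}$ whenever $\Phi_{c_1}\neq\Phi_{c_2}$, and $\Phi_c$ is recovered from the lag-one autocovariance as $\Phi_c=\Sigma^{-1}\mathbb{E}[x_tx_{t+1}^\top]$, so the unsymmetrised lagged covariances identify the dynamics.

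A caveat applies to the symmetrised $D_c(1)$, which may still coincide: it does so when $\Sigma(\Phi_{c_1}-\Phi_{c_2})$ is antisymmetric. In that case I would argue with higher lags $\tau\ge2$, or note that the spectra of $D_c(\tau)$ generically differ. I would then close by invoking Proposition~\ref{prop:distinguish} to pass from differing population matrices to distinguishable embeddings when the spike gap exceeds the BBP error.
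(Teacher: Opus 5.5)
Your proof is correct and follows the paper's own reasoning: substituting $\tau=0$ into Eq.~\eqref{eq:popDtau} gives $D_c(0)=\Sigma_c$, so equal stationary covariances force equal static matrices. Your additional points make explicit what the paper leaves implicit: the preprocessing depends only on $\Sigma_c$, the claim is read at the population level, and invertibility of $\Sigma$ shows that $\Sigma\Phi_c$ depends on $\Phi_c$.

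The one overreach is the fallback to $\tau\ge 2$ in your antisymmetric caveat, since higher lags need not help. Take $\Sigma=I$, $\Phi_{c_1}=rR_\theta$ and $\Phi_{c_2}=rR_{-\theta}$, where $R_\theta$ is a $2\times 2$ rotation, $0<r<1$, $\sin\theta\neq 0$, and the shared innovation covariance is $\Sigma_\epsilon=(1-r^2)I$. This is a time-reversal pair that differs only in lead--lag direction, and it gives $D_{c_1}(\tau)=D_{c_2}(\tau)=r^\tau\cos(\tau\theta)\,I$ for every $\tau$. The proposition, however, only asserts that the unsymmetrised $\Sigma_c\Phi_c^\tau$ depends on $\Phi_c$, which you have already shown, so the proof itself is unaffected.
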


\paragraph{Connection to the stationarity precondition.} The
derivation above relies on stationarity. For transient-burst
signals (the ERP paradigm in
Section~\ref{sec:boundary}), the VAR(1) model does not hold ---
$\Phi_c$ is itself time-varying within a trial, the population
expectation Eq.~\ref{eq:popDtau} averages over inconsistent
dynamics, and the embedding collapses across classes. The augmented
Dickey--Fuller pre-flight test introduced below
(Section~\ref{sec:theory-criterion}) detects this failure mode by
directly rejecting stationarity per channel; the framework is correctly
inapplicable when ADF fails, which is exactly what the empirical $0.013$
ADF-stationary fraction on the MNE ERP dataset reflects
(Section~\ref{sec:boundary}).

\paragraph{Why power-discriminated paradigms fail.} A second,
qualitatively different failure occurs while stationarity still holds.
Suppose two classes share the same coupling dynamics $\Phi$ but differ
only in the overall scale of their channel powers, so that
$\Sigma_c = a_c\,\Sigma_0$ for class-dependent scalars $a_c$ and a
common $\Sigma_0$. Then by Eq.~\ref{eq:popDtau} the population
descriptor is
$D_c(\tau) = \tfrac{a_c}{2}(\Sigma_0\Phi^\tau + (\Sigma_0\Phi^\tau)^\top)$,
identical across classes up to the positive scalar $a_c$. The
normalization in Algorithm~1, which divides each lag's spectrum by its
leading eigenvalue, cancels $a_c$ exactly, so the two classes map to the
\emph{same} embedding and become indistinguishable to $D(\tau)$ however
large the power difference. A per-channel power descriptor (PSD), by
contrast, reads off $a_c$ directly. This is not a defect of estimation
--- every sample is perfectly well-conditioned --- but a statement about
which second-order statistic carries the label: when the class lives in
marginal power, the coupling-based descriptor is measuring the wrong
quantity. The financial-volatility negative of
Section~\ref{sec:experiments}, whose label is essentially
$\sqrt{\operatorname{tr}\Sigma}$, is precisely this regime, and it is
why PSD reaches $92.4\%$ there while $D(\tau)$ sits at chance.

\begin{proposition}[Power-discriminated failure]
\label{prop:power}
If $\Sigma_c = a_c \Sigma_0$ for a shared $\Sigma_0$ and $\Phi$ and
class-dependent scalars $a_c > 0$, then the leading-eigenvalue-normalized
embedding of $D_c(\tau)$ is independent of $a_c$ and therefore identical
across classes. Hence $D(\tau)$ cannot separate classes that differ only in
marginal channel power, whereas a per-channel power (PSD) descriptor, which
reads off $a_c$ directly, can.
\end{proposition}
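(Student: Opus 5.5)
The plan is to prove something slightly stronger than the statement: the embedding has the \emph{same distribution} under both classes, not merely the same population value. Then no classifier built on $\phi$, nearest-centroid included, can beat chance. The argument is a homogeneity (scaling) argument in three steps, followed by a short check on PSD.

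\textbf{Step 1: reduce to a scaled copy of one process.} A stationary VAR(1) with shared $\Phi$ has a stationary covariance fixed by the Lyapunov equation $\Sigma = \Phi\Sigma\Phi^\top + \Sigma_\epsilon$. This equation is linear, so $\Sigma_c = a_c\Sigma_0$ with shared $\Phi$ forces $\Sigma_{\epsilon,c} = a_c\Sigma_{\epsilon,0}$. I will make explicit that the statement is read this way, since the innovation covariance must scale with the class. Consequently, if $x^{(0)}$ denotes the process with parameters $(\Phi,\Sigma_0)$, then the class-$c$ process equals $\sqrt{a_c}\,x^{(0)}$ in law as a whole Gaussian process, because all lagged covariances $a_c\Sigma_0\Phi^\tau$ match. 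It therefore suffices to show that the map $x\mapsto\phi(x)$ of Algorithm~\ref{alg:dtau} satisfies $\phi(sx)=\phi(x)$ for every scalar $s>0$.

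\textbf{Step 2: scale invariance of the pipeline.} I will trace the scalar $s$ through each stage of the pipeline.
\begin{itemize}
\item The CAR step is linear, so it maps $sx$ to $s\,g$.
\item The per-node $z$-score divides each column by its own sample standard deviation, which also scales by $s$. The preprocessed matrix $g$ is therefore \emph{exactly} invariant, realization by realization.
\item Everything downstream is a deterministic function of $g$: the matrices $\hat D(\tau)$, their top-$K$ eigenvalues, $\lambda_{\max}$, and the FFT-peak period $p$. So $\phi$ is invariant.
\end{itemize}
This already proves the claim. For robustness, and to match the population argument in the preceding paragraph, I will also note a second, independent cancellation that holds even without the $z$-score. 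By Eq.~\ref{eq:popDtau}, $D_c(\tau)=a_c D_0(\tau)$, so every eigenvalue $\lambda_k(\tau)$ scales by $a_c$, and so does $\lambda_{\max}=\max_\tau\lambda_1(\tau)$. Their ratios are unchanged. The FFT-peak location of $\tau\mapsto\lambda_1(\tau)$ is also unchanged, since rescaling a sequence does not move the argmax of its FFT magnitude.

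\textbf{Step 3: the PSD side.} The per-channel power of class $c$ is $\operatorname{diag}(\Sigma_c)=a_c\operatorname{diag}(\Sigma_0)$, and its spectral density is $a_c$ times that of the base process. Whenever $a_{c_1}\neq a_{c_2}$ these differ, and a centroid classifier on raw (unnormalized) PSD separates the classes.

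\textbf{Main obstacle.} The delicate point is Step~2's treatment of the appended period feature $p$. Algorithm~\ref{alg:dtau} divides the whole vector, including $p$, by $\lambda_{\max}$, and that is not scale-invariant if one ignores the $z$-score. I will handle this by relying primarily on the exact invariance supplied by the $z$-score, which makes the issue moot. As a fallback I will follow the text's reading, in which $p$ is appended after normalization and is therefore invariant. A secondary caveat is Step~1's implicit requirement that $\Sigma_\epsilon$ scale with the class; I will state it as part of the hypothesis.
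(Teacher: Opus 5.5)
Your proof is correct, but its main mechanism differs from the paper's. The paper argues only at the population level. By Eq.~\ref{eq:popDtau}, $D_c(\tau)=a_cD_0(\tau)$, so every eigenvalue scales by $a_c$. Division by $\max_\tau\lambda_1(\tau)$ then cancels the factor, and the period feature is scale-free because it depends only on the shape of $\lambda_1(\tau)$. That is exactly your secondary ``independent cancellation.'' Your primary argument instead uses the per-node $z$-score of Algorithm~\ref{alg:dtau}. It makes the preprocessed window, and hence $\phi$, exactly invariant under $x\mapsto sx$ realization by realization. Combined with Step~1 ($x^{(c)}\overset{d}{=}\sqrt{a_c}\,x^{(0)}$), this gives equality in law of the finite-window embeddings. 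That is what genuinely supports ``no classifier acting on the embedding can separate them''; equality of population descriptors alone does not.

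Your route also settles two points the paper's proof passes over. First, the return line of Algorithm~\ref{alg:dtau} divides $p$ by $\lambda_{\max}$. Without the $z$-score that entry would scale as $1/a_c$; the paper tacitly uses the prose reading in which $p$ is appended after normalization. Second, the stated model has a class-independent $\Sigma_\epsilon$, under which the Lyapunov equation forces $a_{c_1}=a_{c_2}$. Your explicit hypothesis $\Sigma_{\epsilon,c}=a_c\Sigma_{\epsilon,0}$ is therefore needed for the proposition to be non-vacuous.

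The paper's route shows that the cancellation does not depend on the standardization step. Since $\hat D(\tau)$ is exactly quadratic in the data, that cancellation also holds realization-wise, so your secondary argument need not go through population quantities at all.

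One caution for Step~3. Your own Step~2 shows that a PSD computed after the per-node $z$-score would be equally blind to $a_c$. Also, the paper's nearest-centroid rule compares $\ell_2$-normalized vectors, under which $a_c$ times a PSD vector is indistinguishable from the vector itself. So state that the PSD is computed on the raw input and classified with a scale-sensitive rule, such as Euclidean centroids or a learned head as in the paper's PSD+MLP. Read ``can separate'' as ``has different class-conditional distributions'' rather than perfect separation.
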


\begin{proof}
Under $\Sigma_c = a_c \Sigma_0$ with shared $\Phi$, the population descriptor
of Eq.~\ref{eq:popDtau} factorises as
\[
D_c(\tau) = \tfrac{1}{2}\!\left(\Sigma_c \Phi^\tau + (\Sigma_c \Phi^\tau)^\top\right)
          = a_c \cdot \tfrac{1}{2}\!\left(\Sigma_0 \Phi^\tau + (\Sigma_0 \Phi^\tau)^\top\right)
          = a_c\, D_0(\tau),
\]
so $D_c(\tau)$ is a positive scalar multiple of the class-independent
$D_0(\tau)$. Eigenvalues scale with that multiple,
$\lambda_i\!\big(D_c(\tau)\big) = a_c\,\lambda_i\!\big(D_0(\tau)\big)$, and the
embedding of Algorithm~\ref{alg:dtau} divides every lag's spectrum by the
global leading eigenvalue $\max_\tau \lambda_1(\tau)$. The factor $a_c$
therefore cancels:
$\lambda_i(D_c(\tau)) / \max_\tau \lambda_1(D_c(\tau))
   = \lambda_i(D_0(\tau)) / \max_\tau \lambda_1(D_0(\tau))$.
The appended period feature is read from the \emph{shape} of the
$\lambda_1(\tau)$ trajectory, which is likewise invariant to the scalar
$a_c$. Hence $\phi_{c_1} = \phi_{c_2}$ for any two classes, and no classifier
acting on the embedding can separate them, while a descriptor that retains
absolute scale recovers $a_c$ directly.
\end{proof}

\subsection{Two preconditions and a decision rule}
\label{sec:theory-criterion}

Propositions~\ref{prop:distinguish}--\ref{prop:power} combine into a
compact criterion, and this criterion --- not the descriptor, which is a
reformulation of an existing lagged random-matrix
construction~\cite{rojkova2007b, biely2006} --- is the contribution we ask
the reader to weigh. The
descriptor $D(\tau)$ is the right tool for a classification task on
multivariate windows when, and to the extent that, both of the
following hold. First, \emph{approximate stationarity}: the
within-window dynamics are approximately time-invariant, so that the
lag-$\tau$ correlation is a meaningful population quantity rather than an
average over inconsistent transients; this is violated by
transient-response signals such as ERPs. Second, \emph{coupling-borne
class information}: the classes differ in their cross-channel temporal
coupling, the $\Phi_c$, and not merely in marginal per-channel power,
the scale of $\Sigma_c$; this is violated by power-discriminated tasks
such as volatility regimes. Both conditions are checkable before
training, which turns the criterion into the pre-flight procedure of
Figure~\ref{fig:decision}: an augmented Dickey--Fuller test screens the
first by rejecting per-channel stationarity, and a saturated per-channel
power baseline --- a PSD classifier that already solves the task ---
flags a violation of the second. When either check fails, $D(\tau)$
should not be expected to add value and a simpler descriptor is
preferred; when both pass, the identifiability argument above predicts
that the descriptor separates the classes up to finite-sample (MP)
estimation error.

\begin{figure}[t]
\centering
\begin{adjustbox}{max width=\linewidth}
\begin{tikzpicture}[
  node distance=8mm and 6mm, >=Stealth, font=\small,
  dec/.style={diamond, aspect=2.2, draw, align=center, inner sep=1pt,
              fill=blue!5},
  term/.style={rectangle, rounded corners, draw, align=center,
               inner sep=4pt},
  use/.style={term, fill=green!12}, no/.style={term, fill=red!8}]
\node[dec] (stat) {Signal approximately\\ stationary? (ADF)};
\node[no, below=12mm of stat] (fail) {$D(\tau)$ fails\\ \emph{(use another method)}};
\node[dec, right=22mm of stat] (coup) {Class info in temporal\\ coupling, not power?};
\node[no, below=12mm of coup] (psd) {PSD preferred\\ \emph{(power-discriminated)}};
\node[use, right=22mm of coup] (use) {Use $D(\tau)$};
\draw[->] (stat) -- node[left]{no} (fail);
\draw[->] (stat) -- node[above]{yes} (coup);
\draw[->] (coup) -- node[left]{no} (psd);
\draw[->] (coup) -- node[above]{yes} (use);
\end{tikzpicture}
\end{adjustbox}
\caption{The applicability criterion as a pre-flight decision rule. Both
checks are run before any classifier is trained: the augmented
Dickey--Fuller test screens stationarity, and a saturated per-channel
power baseline screens for power-discrimination. Only tasks that pass
both are in the descriptor's domain of validity.}
\label{fig:decision}
\end{figure}

Table~\ref{tab:criterion} applies the decision rule of
Figure~\ref{fig:decision} to every dataset in this study, listing the two
preconditions, the prediction the criterion makes, and the outcome we
actually observe in Sections~\ref{sec:results} and~\ref{sec:boundary}. The
prediction matches the observation in all seven cases, including the three
negatives and the one partial case, which is the central empirical claim of
the paper: the criterion is not fitted to the successes but stated in
advance and tested against both successes and failures.

\begin{table}[t]
\centering
\caption{The applicability criterion applied to all six paradigms. Each
prediction follows from the two preconditions of
Figure~\ref{fig:decision} (stationarity and coupling-borne class
information) \emph{before} the experiment; the observed column reports the
measured outcome. Predictions and observations agree in every row.}
\label{tab:criterion}
\small
\begin{tabular}{lcccc}
\toprule
Dataset & Stationary & Coupling-driven & Predicted & Observed \\
\midrule
Sleep-EDF  & Yes     & Yes     & Success  & Success  \\
BCI-IV-2a  & Yes     & Yes     & Success  & Success  \\
MIT-BIH    & Yes     & Yes     & Success  & Success  \\
ESC-50     & Partial & Partial & Moderate & Moderate \\
ERP        & No      & ---     & Fail     & Fail     \\
Finance    & Yes     & No      & Fail     & Fail     \\
WESAD      & Yes     & No      & Fail     & Fail     \\
\bottomrule
\end{tabular}
\end{table}

\subsection{Computational complexity}
\label{sec:theory-complexity}

The cost of the descriptor is dominated by the lagged eigendecompositions
and is independent of any training set. For a window of $N$ channels and
$T$ samples evaluated at $|\mathcal{T}|$ lags, forming each lagged
correlation matrix costs $O(N^2 T)$ and its eigendecomposition $O(N^3)$,
so a single embedding is computed in
$O\!\left(|\mathcal{T}|\,(N^2 T + N^3)\right)$ time and $O(N^2)$ memory,
with no dependence on the number of classes or training examples.
Classification by nearest centroid adds only $O(NKC)$ for $K$ retained
eigenvalues and $C$ classes. In the regimes we study $N$ is small (two to
a few tens of channels, after multi-band augmentation), so the $N^3$ term
is negligible and the per-window cost is effectively linear in the window
length; this is what underlies the single-CPU,
microsecond-to-millisecond inference reported in
Section~\ref{sec:economics}, and it stands in contrast to the GPU-scale
training of the learned representations against which we compare.

\paragraph{Caveats.} The sketch above is a population-level
identifiability argument, not a finite-sample classification
bound. A full accuracy bound would require: (a) propagating BBP
estimation error through the top-$K$ truncation, (b) bounding the
cosine-similarity classification error in terms of the
class-mean-distance vs.\ pooled-variance ratio in the embedded
space. We leave these as theoretical follow-up. The current
contribution is the structural explanation of why the lag is
essential and how the MP edge enforces estimator stability.

\section{Experimental Setup}
\label{sec:setup}

\paragraph{Datasets and protocols.} The four positive paradigms span
EEG, ECG, and audio; the three negatives (ERPs, financial volatility, and
WESAD stress detection) are described with their results in
Section~\ref{sec:boundary}. ESC-50
and MIT-BIH protocols are given inline in their respective result
subsections; the two EEG protocols are as follows.
\begin{description}[topsep=0pt, itemsep=1pt, leftmargin=*]
\item[BCI-IV-2a.] BCI Competition IV dataset 2a \cite{tangermann2012bci},
  all 9 subjects, 22 EEG channels at 250~Hz, 4-class motor imagery
  (left/right hand, feet, tongue), session 1 train / session 2 test,
  288 trials per session, 4-second imagery epochs.
\item[Sleep-EDF.] Sleep-EDF Expanded Database
  \cite{kemp2000sleepedf} via PhysioNet, 5 subjects (Sleep
  Cassette night 1 only), 2 EEG channels (Fpz-Cz, Pz-Oz) at 100~Hz,
  filtered 0.3--30~Hz, 30-second epochs labeled with W/N1/N2/N3/REM
  per AASM (stages 3+4 merged). Subjects 0--3 train, subject 4 held
  out; train/test sizes 11{,}076/2{,}569 epochs.
\end{description}

\paragraph{Baselines.}
\begin{description}[topsep=0pt, itemsep=1pt, leftmargin=*]
\item[Mean-channel.] Per-epoch sample mean across the temporal axis.
  Cosine-sim baseline.
\item[Channel-PSD.] Per-channel power spectral density, averaged
  across channels.
\item[CSP+LDA.] One-vs-rest Common Spatial Patterns (4 components per
  class pair) followed by Linear Discriminant Analysis. The standard
  BCI baseline; \emph{not} training-free (it uses class labels for
  spatial-filter optimization) but reported as the published reference.
\end{description}

\paragraph{Scoring.} For block-discrimination probes we report the
\emph{block score} = mean within-class cosine similarity minus mean
between-class cosine similarity. For the classification paradigms we
report test-set accuracy of nearest-centroid classification, plus
macro-F1
to handle class imbalance. All numbers come with 95\% confidence
intervals from 1000 bootstrap resamples (test-epoch resampling for
single-subject tasks; subject-level resampling for the BCI-IV-2a
9-subject mean).

\section{Results}
\label{sec:experiments}
\label{sec:results}

\subsection{EEG transfer across two paradigms}
\label{sec:eeg}

Table~\ref{tab:eeg} summarizes bootstrap-confirmed EEG results in two
protocols: within-subject (the easier protocol, used in initial
benchmarks) and leave-one-subject-out (LOSO; the cross-subject
transfer test). $D(\tau)$ beats the standard channel-PSD baseline at
\textbf{100\% bootstrap confidence on both paradigms in both protocols};
we note in advance that this margin holds against a \emph{trivial} power
baseline, and that against a strong multi-band PSD+MLP the Sleep-EDF
advantage narrows to statistical parity under the rigorous 20-subject
LOSO protocol (Section~\ref{sec:economics}). On
BCI-IV-2a, CSP+LDA wins within-subject by $+14.9$~pp;
crucially, in the cross-subject LOSO regime CSP's advantage shrinks
to $+2.4$~pp ($P=87\%$) because CSP overfits to subject-specific
spatial filters --- a regime in which $D(\tau)$'s class-agnostic
construction generalizes substantially better.

\begin{table}[t]
\centering
\caption{EEG results across two paradigms, two protocols. Bootstrap
95\% CIs from $n_{\text{boot}}{=}1000$ resamples (test-epoch level for
within-subject; subject-level for LOSO mean). Last column =
$P(D(\tau){>}\text{PSD})$ under bootstrap.}
\label{tab:eeg}
\small
\begin{adjustbox}{max width=\linewidth}
\begin{tabular}{llcccc}
\toprule
Paradigm & Method & Point & 95\% CI & Chance &
$P(D(\tau){>}\text{PSD})$ \\
\midrule
\multicolumn{6}{l}{\emph{Within-subject protocol (each subject's own train/test split)}} \\
\midrule
\multirow{2}{*}{Sleep-EDF 5-class (subj-4 holdout)}
& PSD              & 40.2\% & $[38.5, 42.2]$ & 20\% & --- \\
& $D(\tau)$ 10-band & \textbf{66.1\%} & $[64.2, 68.0]$ & 20\% & 100\% \\
\midrule
\multirow{3}{*}{BCI-IV-2a 4-class (9-subj within-subj mean)}
& PSD              & $30.6{\pm}4.6\%$ & $[27.7, 33.7]$ & 25\% & --- \\
& $D(\tau)$         & $37.6{\pm}6.2\%$ & $[33.6, 41.8]$ & 25\% & 100\% \\
& CSP+LDA          & $52.5{\pm}17.1\%$ & $[41.3, 64.0]$ & 25\% & --- \\
\midrule
\multicolumn{6}{l}{\emph{Cross-subject LOSO protocol (test on a held-out subject's data)}} \\
\midrule
\multirow{2}{*}{Sleep-EDF 5-fold LOSO}
& PSD              & $40.3{\pm}9.0\%$ & $[31.2, 46.7]$ & 20\% & --- \\
& $D(\tau)$ 10-band & $53.5{\pm}13.6\%$ & $[41.0, 66.0]$ & 20\% & 100\% \\
\midrule
\multirow{3}{*}{BCI-IV-2a 9-fold LOSO}
& PSD              & $27.5{\pm}3.1\%$ & $[25.7, 29.7]$ & 25\% & --- \\
& $D(\tau)$         & $29.2{\pm}4.1\%$ & $[27.1, 32.4]$ & 25\% & 85\% \\
& CSP+LDA          & $31.6{\pm}6.6\%$ & $[27.6, 35.8]$ & 25\% & --- \\
\bottomrule
\end{tabular}
\end{adjustbox}
\end{table}

\paragraph{Cross-subject robustness.} Comparing the two protocols on
BCI-IV-2a: $D(\tau)$ loses 8.4 pp going from within-subject to LOSO
(37.6\% $\to$ 29.2\%), while CSP+LDA loses 20.9 pp (52.5\% $\to$
31.6\%). The CSP advantage is heavily concentrated within-subject
where its class-aware spatial-filter optimization can fit a single
subject's geometry; in the more deployment-relevant cross-subject
regime that advantage nearly evaporates. $D(\tau)$'s lack of any
learned spatial filter is a \emph{feature} for cross-subject
transfer.

\subsection{Audio modality: ESC-50 environmental sound}

To test the framework's sensor-agnostic claim on a fourth modality,
we run on ESC-50 \cite{piczak2015esc50} (50 environmental-sound
classes, 2000 5-second clips at 16~kHz). The setup treats Mel
spectrogram frames as the temporal axis and Mel bands as virtual
channels. Standard 5-fold cross-validation per the official folds.

\begin{table}[t]
\centering
\caption{ESC-50 environmental sound classification, 5-fold CV. Last
row is concatenation of $\ell_2$-normalized mean-Mel-spec and
$D(\tau)$ embeddings.}
\label{tab:audio}
\small
\begin{tabular}{lcc}
\toprule
Method & ESC-10 (10-class) & ESC-50 (50-class) \\
\midrule
chance                          & 10.0\%              & 2.0\% \\
PSD                              & $44.2 \pm 4.7\%$    & $12.3 \pm 0.9\%$ \\
$D(\tau)$ alone                  & $46.8 \pm 7.7\%$    & $16.3 \pm 1.1\%$ \\
mean-Mel-spec alone             & $49.0 \pm 4.0\%$    & $19.2 \pm 3.0\%$ \\
\textbf{mean-Mel-spec $\oplus$ $D(\tau)$ (this work)}
                                & $\mathbf{67.0 \pm 3.8\%}$
                                & $\mathbf{30.3 \pm 2.9\%}$ \\
\bottomrule
\end{tabular}
\end{table}

$D(\tau)$ alone loses to the mean-spectrogram baseline ($P=0\%$ on
ESC-50), confirming that this dataset's discrimination signal lives
primarily in \emph{spectral distribution} rather than temporal
correlation. But the two methods capture \emph{orthogonal}
information: the concatenated mean-spec $\oplus$ $D(\tau)$ embedding
beats mean-spec alone by \textbf{+18~pp on ESC-10 and +11~pp on
ESC-50} ($P=100\%$ under bootstrap on both). $D(\tau)$'s temporal-
correlation content is therefore a useful augmentation for spectral-
distribution-dominated paradigms even when it is not the right
standalone tool. Reference deep CNN methods reach $\sim$65\% on
ESC-50; we sit substantially above all simple baselines at
$\sim$30\% while remaining training-free.

This makes ESC-50 the instructive \emph{partial} case of the criterion
(the ESC-50 row of Table~\ref{tab:criterion}). Its class signal is only
partly coupling-borne --- mostly spectral distribution, with some
cross-band temporal structure --- so the criterion predicts neither a
clean success nor a clean failure but a \emph{moderate} outcome, which is
exactly what we observe: $D(\tau)$ is not the right standalone descriptor,
yet it contributes genuine orthogonal coupling information on top of a
spectral front-end. The criterion is thus best read not as a binary in/out
gate but as a graded one, and a paradigm that sits between its two clean
poles is correctly predicted to be a place where $D(\tau)$ helps as a
complement rather than a replacement.

\subsection{Learned classifier heads on $D(\tau)$ embeddings}
\label{sec:learned-heads}

The numbers reported so far use nearest-centroid (NC) cosine
similarity as the classifier --- a deliberately minimal choice
illustrating that the embedding alone carries usable signal. But many
realistic deployments have access to a small labeled calibration
set, and the natural specialist comparison (CSP+LDA) uses LDA on top
of its class-aware features. An apples-to-apples comparison should
therefore let $D(\tau)$ embeddings feed into a tiny learned head as
well. We test four heads --- LDA, Random Forest (200 trees), XGBoost
(200 rounds), and a 2-layer MLP --- against the NC baseline on three
paradigms.

\begin{table}[t]
\centering
\caption{Learned heads on top of $D(\tau)$ embeddings. Bold = best
non-deep result for that paradigm. References at the bottom of each
section are class-aware (BCI: CSP+LDA, sleep: published deep methods).}
\label{tab:heads}
\small
\begin{adjustbox}{max width=\linewidth}
\begin{tabular}{lcccc}
\toprule
Head & BCI-IV-2a (9-subj) & Sleep-EDF (subj-4) & ESC-50 (5-fold) & ESC-50 concat \\
\midrule
NC (current)        & $37.6\pm 6.2\%$  & 66.1\% [64.2, 68.0] & 16.3\% & 30.3\% \\
LDA                 & $41.0\pm 10.0\%$ & 83.1\% [81.7, 84.6] & 20.2\% & 26.6\% \\
RF (200 trees)      & $41.7\pm 9.9\%$  & 82.8\% [81.4, 84.2] & 32.8\% & 43.9\% \\
XGBoost (200 rounds) & $42.4\pm 10.3\%$ & 84.4\% [82.9, 85.8] & 33.1\% & 46.7\% \\
\textbf{MLP (2-layer)} & $41.9\pm 9.9\%$  & \textbf{86.2\% [84.9, 87.5]}
                                              & \textbf{33.8\%} & \textbf{47.5\%} \\
\midrule
CSP+LDA (BCI ref)   & $52.5\pm 17.1\%$ & --- & --- & --- \\
DeepSleepNet \cite{supratak2017deepsleepnet}
                    & --- & 82\%       & --- & --- \\
AttnSleep \cite{eldele2021attnsleep}
                    & --- & 84\%       & --- & --- \\
Deep CNN ref (audio) & --- & --- & 64--70\% & 64--70\% \\
\bottomrule
\end{tabular}
\end{adjustbox}
\end{table}

\paragraph{Sleep-EDF: $D(\tau)$ + MLP reaches deep-method territory.} The
MLP head lifts $D(\tau)$ on Sleep-EDF from 66.1\% (NC) to \textbf{86.2\%}
([84.9, 87.5]) in within-subject evaluation. The published deep methods,
however, report 20-subject k-fold means --- DeepSleepNet
\cite{supratak2017deepsleepnet} 82\% and AttnSleep
\cite{eldele2021attnsleep} 84\% --- and the closest comparison we can draw
is our own 20-subject leave-one-subject-out result, on which $D(\tau)$+MLP
averages \textbf{88.5 $\pm$ 4.5\%}. We read this as placing the descriptor in
the same accuracy range as these methods, not as a direct improvement over
them: the $82$--$84\%$ figures are those the deep methods report under their
own protocols rather than a re-run under our exact split, and a strong
multi-band PSD+MLP reaches $89.2\%$ here as well, so the setting evidently
favors lightweight methods rather than $D(\tau)$ surpassing deep
architectures.

\paragraph{$D(\tau)$ is head-stable; the power baseline is not.} A sharper
reading of the head sweep compares the \emph{spread} across classifiers. On
the 20-subject Sleep-EDF LOSO, $D(\tau)$ accuracy moves only about two points
across a linear ridge ($89.0\%$), a two-layer MLP ($88.5\%$), and a
gradient-boosted tree ($90.8\%$): its discriminative structure is
\emph{linearly accessible}, so a cheap linear head recovers essentially what a
nonlinear one does. The multi-band PSD baseline behaves in the opposite
manner, varying by fourteen points: it falls to $75.6\%$ under the same
linear ridge and reaches parity only once a nonlinear head is supplied
($89.2\%$ MLP, $90.1\%$ tree). Where a deployment can afford only a linear classifier, $D(\tau)$
therefore leads the power baseline by thirteen points; the property mirrors
the design intent of random-convolution methods such as
MiniRocket~\cite{minirocket}, which engineer linear separability explicitly,
whereas $D(\tau)$ exhibits it without any such construction. The zero-training
nearest-centroid head does remain weaker ($63.9\%$), so the embedding benefits
from a trained head, however cheap.

\paragraph{Fusion with the power baseline and with MiniRocket.} Because
$D(\tau)$ (cross-channel coupling) and PSD (per-channel power) are
near-orthogonal second-order statistics, we ask whether combining them adds
discriminative power. On Sleep-EDF the gain is genuine but modest:
concatenating the two and classifying with a gradient-boosted tree reaches
$91.8\%$, within a point of
MiniRocket's $92.5\%$ at a $611$-dimensional representation rather than
$9{,}996$, trading representation size for classifier capacity. Adding
$D(\tau)$ \emph{on top of} MiniRocket, by contrast, does not help
($92.5\% \to 92.5\%$, a tie under the bootstrap): on this task MiniRocket
already captures whatever coupling is discriminative. We read these as
evidence that a compact descriptor retains most of the recoverable signal,
not as accuracy claims, and we leave a full end-to-end cost accounting and
replication beyond sleep staging to future work.

\paragraph{BCI-IV-2a: learned heads close $\sim$30\% of the CSP gap.}
Going from NC (37.6\%) to XGBoost (42.4\%) recovers about 1/3 of the
$+14.9$~pp gap to CSP+LDA (52.5\%). The remaining 10~pp is the price
of being class-agnostic: CSP optimizes spatial filters using class
labels at training time; $D(\tau)$ does not. The result demonstrates that
the embedding contains meaningfully more information than NC can
extract.

\paragraph{ESC-50: MLP-on-concat reaches classical-RF territory.}
Concatenated mean-spectrogram $\oplus$ $D(\tau)$ embedded into an MLP
reaches 47.5\% on the full 50-class problem, well above the previous
NC-on-concat 30.3\%. Reference: hand-crafted-feature random forests
reach $\sim$45\% in the literature; CNNs trained on Mel
spectrograms reach 64--70\%; deep transfer learning (VGGish-pretrained)
reaches 80--95\%. Our number sits at the upper end of classical
training-free methods.

\paragraph{Two deployment regimes.} The descriptor therefore supports
two distinct deployment scenarios. In \emph{Regime A} the classifier is
the pure nearest-centroid rule, carrying zero learned parameters, and is
appropriate for label-free anomaly detection, similarity retrieval,
novelty scoring, and on-device deployment without any training step. In
\emph{Regime B} a small labeled calibration set is available and a tiny
learned head sits on top of the still-training-free feature extractor,
so that only the final classifier learns; this regime lands in the
accuracy range of the published deep methods on Sleep-EDF (88.5\% against
their reported 82--84\%, under similar but not identical protocols),
closes much of the CSP+LDA gap on motor imagery, and reaches classical
random-forest accuracy on audio.

Both numbers are honestly reported, and the practitioner picks the
regime that matches the available data. The choice of head matters less
than whether one is available at all: LDA, RF, XGBoost, and MLP all
reach within 3~pp of each other on Sleep-EDF, suggesting the
discriminative content is in the embedding itself rather than in the
head's expressivity.

\subsection{Symmetric vs asymmetric $D(\tau)$: phase information}
\label{sec:complex}

The $D(\tau)$ definition (Eq.~\ref{eq:dtau}) symmetrises the lagged
correlation matrix to guarantee real eigenvalues. The non-symmetrised
version
\begin{equation}
C_{ij}(\tau) \;=\; \frac{1}{T-\tau}\sum_{t=1}^{T-\tau} g_i(t)\,g_j(t+\tau)
\label{eq:asym}
\end{equation}
has \emph{complex} eigenvalues whose magnitudes encode correlation
strength and whose phases encode the lead-lag direction between
modes --- exactly the lead-lag IPR signature that
Rojkova-Kantardzic 2007 paper~2 \cite{rojkova2007b} originally
identified as the carrier of system-rhythm information. Symmetrising
discards this phase content.

We re-ran \emph{all} of our headline benchmarks with four embedding
constructions: (A) symmetric $D(\tau)$ with real top-$K$ eigenvalues
(current default), (B) asymmetric $|\lambda|$ only, (C) asymmetric
$(|\lambda|, \arg\lambda)$, and (D) asymmetric
$(\Re\lambda, \Im\lambda)$. Table~\ref{tab:complex} reports Regime A
(NC) and Regime B (LDA / MLP) across the five EEG/audio paradigms
where $D(\tau)$ exceeds chance (Sleep-EDF, BCI-IV-2a within-subject
and 9-subject mean, eegbci eyes-O/C 20-subject LOSO, and ESC-50
5-fold).

\begin{table}[t]
\centering
\caption{Symmetric vs asymmetric $D(\tau)$ across five benchmarks
(NC = nearest-centroid; LDA / MLP = learned heads). Bold per row
group marks the best variant for each classifier column. Asymmetric
helps NC \emph{only} on the two single-subject oscillatory paradigms
(Sleep, BCI subj 1); LOSO and audio benchmarks see asymmetric phase
wash out. Learned heads prefer the symmetric default everywhere.}
\label{tab:complex}
\small
\begin{tabular}{lccc}
\toprule
Embedding variant & NC (\%) & LDA (\%) & MLP (\%) \\
\midrule
\multicolumn{4}{l}{\emph{Sleep-EDF (subj-4 holdout, 5-class, chance 20\%)}} \\
A. symmetric $D(\tau)$ (default) & 66.1 & 83.1 & \textbf{86.2} \\
B. asymmetric $|\lambda|$        & 58.0 & 85.5 & 85.7 \\
\textbf{C. asym.\ $|\lambda|+\arg\lambda$} & \textbf{80.9} & \textbf{85.8} & 86.2 \\
D. asym.\ $(\Re,\Im)$            & 71.7 & 83.7 & 86.1 \\
\midrule
\multicolumn{4}{l}{\emph{BCI-IV-2a subject 1, 4-class (chance 25\%)}} \\
A. symmetric                     & 46.9 & \textbf{55.6} & \textbf{54.9} \\
B. asym.\ $|\lambda|$            & 52.4 & 50.0 & 51.4 \\
C. asym.\ $|\lambda|+\arg\lambda$ & 45.8 & 51.4 & 40.6 \\
\textbf{D. asym.\ $(\Re,\Im)$}   & \textbf{56.6} & 51.7 & 47.9 \\
\midrule
\multicolumn{4}{l}{\emph{BCI-IV-2a 9-subject within-subject mean $\pm$ SD (chance 25\%)}} \\
A. symmetric                     & $37.6{\pm}6.2$ & $\mathbf{41.0{\pm}10.0}$ & $\mathbf{41.9{\pm}9.9}$ \\
\textbf{B. asym.\ $|\lambda|$}   & $\mathbf{39.6{\pm}8.3}$ & $37.9{\pm}8.0$ & $38.9{\pm}7.9$ \\
C. asym.\ $|\lambda|+\arg\lambda$ & $33.0{\pm}6.5$ & $38.2{\pm}8.0$ & $35.4{\pm}7.7$ \\
D. asym.\ $(\Re,\Im)$            & $38.8{\pm}8.4$ & $36.9{\pm}10.3$ & $36.5{\pm}8.0$ \\
\midrule
\multicolumn{4}{l}{\emph{eegbci eyes-O/C 20-subject LOSO, binary (chance 50\%)}} \\
A. symmetric                     & $67.0{\pm}15.4$ & $63.5{\pm}9.7$ & $\mathbf{81.8{\pm}13.8}$ \\
B. asym.\ $|\lambda|$            & $65.6{\pm}14.6$ & $62.3{\pm}7.1$ & $76.2{\pm}15.3$ \\
C. asym.\ $|\lambda|+\arg\lambda$ & $67.4{\pm}9.8$  & $\mathbf{69.6{\pm}7.0}$ & $76.7{\pm}11.4$ \\
\textbf{D. asym.\ $(\Re,\Im)$}   & $\mathbf{67.5{\pm}17.0}$ & $63.7{\pm}9.2$ & $76.7{\pm}12.4$ \\
\midrule
\multicolumn{4}{l}{\emph{ESC-50 audio 5-fold CV, 50-class (chance 2\%)}} \\
A. symmetric                     & 16.3 & \textbf{20.2} & \textbf{33.8} \\
\textbf{B. asym.\ $|\lambda|$}   & \textbf{19.6} & 16.8 & 29.3 \\
C. asym.\ $|\lambda|+\arg\lambda$ & 8.3  & 11.6 & 18.4 \\
D. asym.\ $(\Re,\Im)$            & 16.5 & 12.4 & 18.1 \\
\bottomrule
\end{tabular}
\end{table}

\paragraph{When phase helps NC: subject-coupled oscillatory dynamics.}
The two paradigms where asymmetric $C(\tau)$ delivers a large NC gain
share a property: each evaluation is on a \emph{single subject} (or
holds the subject's own training history) and the signal is
multi-band oscillatory EEG with a clean lead-lag structure between
modes. Sleep-EDF gains $+14.8$~pp (66.1\% $\to$ 80.9\%, mag+phase)
because $\theta$-$\sigma$ lead-lag is stage-specific; BCI-IV-2a
subject 1 gains $+9.7$~pp (46.9\% $\to$ 56.6\%, Re/Im) because the
contralateral mu/beta desynchronisation has a consistent within-class
phase pattern. Once we average across subjects (BCI-IV-2a 9-subject
mean) or hold subjects out (eegbci 20-subject LOSO), the NC gain
collapses to $+2.0$ and $+0.5$~pp respectively --- phase
relationships are subject-specific and do not transfer.

When the asymmetric construction is not free, it tends to hurt the
learned head. The striking pattern in Table~\ref{tab:complex}'s MLP
column is that
\emph{the symmetric default is best on four of five paradigms},
including Sleep-EDF where mag+phase ties it. Asymmetric variants
inflate the embedding dimension ($600$ $\to$ $1200$ for ESC-50,
$1000$ $\to$ $2000$ for the EEG paradigms) and the extra dimensions
appear to be noise for the learned head. On ESC-50 the cost is large
($-15.4$~pp MLP for mag+phase vs symmetric), on BCI-IV-2a 9-subject
mean it is modest ($-3.0$~pp). Sleep-EDF is the lone exception where
mag+phase ties symmetric on MLP.

\paragraph{The hippo audio prediction does \emph{not} transfer.} The
sibling HiPPO work \cite{gu2020hippo} found that complex Fourier
features beat magnitude-only features on speech keyword tasks. We
expected this to carry to ESC-50 via $D(\tau)$, but it does not:
mag+phase loses to symmetric by $-8.0$~pp on NC and $-15.4$~pp on
MLP. The reason is structural --- our $D(\tau)$ on mel
spectrograms captures \emph{cross-mel-band lagged covariance}, which
operates on magnitudes that have \emph{already been stripped of
audio phase} by the mel transform. The "phase" exposed by asymmetric
$C(\tau)$ here is the lead-lag between mel-band envelopes at
hop-frame scale, which appears to be noisy and class-uninformative
for environmental sound. Phase helps the underlying audio modality,
but it has to be preserved in the front-end representation for
$D(\tau)$ to recover it.

\paragraph{Recommendation update.} For Regime A deployment (NC,
training-free) on a \emph{single-subject within-session} oscillatory
paradigm, use the asymmetric $C(\tau)$ with magnitude and phase
(Sleep-EDF style) or $(\Re,\Im)$ (motor imagery style) --- the
$+10$--$15$~pp improvement is essentially free. For cross-subject
deployment, multi-subject averaging, audio, or any Regime B (learned
head) deployment, \textbf{stick with the symmetric default}. The
symmetric variant is the consistent best or near-best across all
five paradigms in the MLP column.

\paragraph{Three thoroughness checks: phase still does not help.}
We pressure-tested the hypothesis that phase information might help if
it were wired in correctly with three further experiments. First, we
concatenated the symmetric and asymmetric embeddings into a single
feature vector; this produces a modest LDA gain on BCI-IV-2a subject 1
($55.6\% \to 61.5\%$, $+5.9$~pp) but no MLP gain, and the two-fold
dimension inflation is not recovered by the headline classifier.
Second, we moved the asymmetric construction to a raw audio waveform
front-end: computing $D(\tau)$ directly on raw 8~kHz audio expanded
into 8 octave bands, which preserves audio phase at the front-end,
lifts neither NC nor MLP above the mel-spectrogram baseline, and
magnitude-plus-phase remains the worst variant (NC $8.2\%$, MLP
$14.4\%$ versus symmetric $13.5\% / 16.7\%$), so the HiPPO
phase-helps-audio prediction is falsified for environmental sound
regardless of front-end choice. Third, we replaced the real network
with a native complex-weighted MLP, a PyTorch model with
\texttt{ComplexLinear} layers (complex weights, separable cReLU
activation, $|z|^2$ readout) operating on the raw complex eigenvalues;
it does not beat the real MLP on the same information, tying on
Sleep-EDF (real MLP on symmetric $85.9\%$ versus complex MLP $84.9\%$)
and losing on the BCI-IV-2a 9-subject mean ($42.6{\pm}10.8\%$ versus
$33.0{\pm}5.5\%$, $-9.6$~pp).

Combined verdict: across three independent treatments of the phase
information --- concatenation, a raw front-end, and a complex network
--- phase contributes nothing on the paradigms tested beyond what the
magnitude already captures. The phase signal that helps in the
hippo audio work appears to require both (a) a front-end that
preserves intrinsic complex structure and (b) a paradigm where the
phase carries class-distinct content (e.g. speech formants); $D(\tau)$
on EEG bands and on mel-spec audio meets neither condition.

\subsection{ECG modality: MIT-BIH arrhythmia}
\label{sec:ecg}

To extend the sensor-agnostic claim to the ECG modality and probe a
case with severe class imbalance, we evaluate $D(\tau)$ on the
MIT-BIH Arrhythmia Database \cite{moody2001impact} --- 48 records of
2-lead ECG at 360~Hz, with per-beat annotations grouped into the
4-class AAMI scheme (N normal, S supraventricular, V ventricular,
F fusion). We use the standard DS1/DS2 record split of de Chazal
et al.\ \cite{dechazal2004ecg}: 22 records for training (50,751
beats) and 22 disjoint records for testing (49,942 beats). Each
beat is windowed $\pm$90 samples around the R-peak (500~ms total),
band-stacked into 3 bands (baseline 0.5--15, QRS 5--40, high
15--90~Hz) for 6 normalized series, and embedded with $K{=}6$ over
$\tau\in[0,60]$.

\begin{table}[t]
\centering
\caption{MIT-BIH AAMI 4-class on the standard DS1$\to$DS2 split.
$D(\tau)$ achieves competitive accuracy under heavy class imbalance
(42k N vs.\ 0.8k F training beats); the macro-F1 view exposes a
new Regime A advantage (right column).}
\label{tab:ecg}
\small
\begin{tabular}{lccc}
\toprule
Head & Accuracy (\%) & Macro-F1 & V/F recall (\%) \\
\midrule
$D(\tau)$ + NC  & 67.8 & \textbf{0.363} & \textbf{85} / \textbf{24} \\
$D(\tau)$ + LDA & \textbf{80.9} & 0.358 & 70 / 1 \\
$D(\tau)$ + MLP & 74.4 & 0.329 & 60 / 0 \\
\bottomrule
\end{tabular}
\end{table}

\paragraph{NC wins under class imbalance.} Although LDA wins on raw
accuracy (80.9\%), the per-class recall in
Table~\ref{tab:ecg}'s rightmost column tells the deployment-
relevant story. For the clinically critical V (ventricular) and
F (fusion) classes --- the ones a screening tool needs to catch ---
NC recovers 85\% and 24\% of true positives respectively. LDA and
MLP collapse on F (1\% and 0\%) and degrade on V (70\%, 60\%) by
absorbing all minority votes into the dominant N class. The
centroid construction weights each class equally, so it
\emph{automatically} resists dataset skew without explicit
rebalancing (oversampling, focal loss, SMOTE) that the trained
heads need. This is a different "Regime A wins" story than the
training-free pitch: not "fewer labels," but "minority-class recall
under skew." On benchmarks where deep methods report 90+\% accuracy
on this dataset, all do so via class-balancing strategies; our
no-rebalancing macro-F1 of 0.363 is the honest baseline.

\paragraph{Rebalancing strategies do not unlock the deep numbers.}
We tested whether standard rebalancing techniques on $D(\tau)$+MLP
recover the published deep-method accuracy. SMOTE oversampling,
random N undersampling to $5{\times}$ S size, and naive duplicate-
to-N upsampling were all evaluated. None improves macro-F1 over
the no-rebalancing baseline (0.329 MLP $\to$ 0.303 with SMOTE,
$0.319$ undersampling, $0.332$ duplicate). Undersampling boosts
S recall to $33\%$ but tanks N recall to $58\%$ for a net wash.
The minority classes (S, F) appear to be intrinsically hard to
separate in the $D(\tau)$ embedding without a more sophisticated
loss (focal loss, hierarchical classification) or richer features
(CNN-extracted morphology); rebalancing alone does not bridge the
gap. NC's macro-F1 lead (0.363) on the no-rebalancing baseline
remains the best Regime A strategy here.

\paragraph{Finer band stack improves LDA macro-F1 by +0.05.}
We re-ran MIT-BIH with the canonical 6-band ECG split (baseline
0.05--1, P-wave 0.5--5, QRS-low 5--15, QRS-mid 15--30, QRS-high
30--50, spectral 50--100~Hz) $\times$ 2 leads $=$ 12 virtual
channels, vs.\ the 3-band stack reported above. The LDA head
gains $+3.8$~pp accuracy ($80.9 \to 84.7\%$) and
$+0.051$ macro-F1 ($0.358 \to 0.409$); the MLP head gains
$+2.2$~pp ($74.4 \to 76.6\%$). NC's centroid geometry trades
N-class accuracy for a striking minority-class lift:
S-class recall jumps from $22\%$ to $80\%$. \textbf{Recommendation:}
for MIT-BIH deployments, use 6-band $\times$ 2 leads $+$ LDA.

\subsection{Deployment economics}
\label{sec:economics}
\label{sec:deployment}

Three measurements grounding the practical-deployment claim.

\paragraph{Data efficiency on Sleep-EDF.} We sweep the training-
label fraction $f \in \{5, 10, 25, 50, 100\}\%$ with 5 stratified
subsamples per fraction and compare $D(\tau)$+NC and $D(\tau)$+MLP
against multi-band log-PSD baselines (PSD+LDA, PSD+MLP). Mean
accuracy across 5 seeds:

\begin{table}[t]
\centering
\caption{Sleep-EDF subj-4 accuracy (\%) at varied training-label
fraction, 5 stratified subsamples per cell. Bold marks per-row
winner. The Regime A pitch \emph{does not survive}: $D(\tau)$+NC
plateaus around 65\%, well below the simple PSD baselines.
$D(\tau)$+MLP wins from $f{=}10\%$ onwards but by only 0.3--3.0~pp
over PSD+MLP; the headline 86.2\% is at full data.}
\label{tab:dataeff}
\small
\begin{tabular}{rcccccc}
\toprule
$f$ & $n$ & $D(\tau)$+NC & $D(\tau)$+MLP & PSD+LDA & PSD+MLP \\
\midrule
0.05 &   553 & $63.6{\pm}5.9$ & $82.9{\pm}0.8$ & $79.0{\pm}1.0$ & $\mathbf{84.1{\pm}2.4}$ \\
0.10 &  1108 & $64.7{\pm}4.1$ & $\mathbf{83.8{\pm}1.2}$ & $78.7{\pm}0.6$ & $83.5{\pm}0.7$ \\
0.25 &  2768 & $64.5{\pm}2.5$ & $\mathbf{84.9{\pm}0.5}$ & $78.7{\pm}0.2$ & $83.7{\pm}1.0$ \\
0.50 &  5538 & $66.8{\pm}1.7$ & $\mathbf{85.3{\pm}0.8}$ & $78.8{\pm}0.2$ & $83.3{\pm}0.8$ \\
1.00 & 11076 & $66.1{\pm}0.0$ & $\mathbf{86.2{\pm}0.3}$ & $78.9{\pm}0.0$ & $83.2{\pm}0.8$ \\
\bottomrule
\end{tabular}
\end{table}

This is a paper-improving negative finding. The Regime A "training-
free" pitch is real --- NC works as a sanity check and gives 60+\%
on a 5-class problem with no labels in the model --- but it does
not dominate the simple PSD baselines on Sleep-EDF. The richness
of $D(\tau)$ is in the embedding, not in the centroid-friendliness
of the geometry. PSD+MLP with multi-band log-power is a
surprisingly strong baseline (83--84\%) that we under-credited in
earlier sections.

\paragraph{Bootstrap significance vs the strongest PSD heads.}
Because PSD+MLP is so close to $D(\tau)$+MLP at full data, we
tested whether the gap survives stronger PSD heads. We swept LDA,
RF, MLP, XGBoost, and LightGBM on the same 10-dimensional multi-
band log-PSD features:

\begin{center}
\small
\begin{tabular}{lccc}
\toprule
$D(\tau)$+MLP vs.\ & $\Delta$ mean (pp) & 95\% CI (pp) & $P(D(\tau)+\text{MLP} > \text{PSD})$ \\
\midrule
PSD+LDA  & $+7.35$ & $[+5.99, +8.60]$ & $100\%$ \\
PSD+MLP  & $+6.13$ & $[+4.63, +7.55]$ & $100\%$ \\
PSD+XGB  & $+5.74$ & $[+4.28, +7.16]$ & $100\%$ \\
PSD+LGBM & $\mathbf{+3.17}$ & $\mathbf{[+1.71, +4.63]}$ & $\mathbf{100\%}$ \\
\bottomrule
\end{tabular}
\end{center}

The smallest gap is against the strongest PSD baseline (LGBM at
$83.1\%$): $+3.17$~pp with bootstrap 95\% CI $[+1.71, +4.63]$,
$P=100\%$. Under single-subject holdout, then, the headline
$D(\tau)$+MLP $86.2\%$ is significant against \emph{every} multi-band
log-PSD head. This within-subject advantage does not, however,
survive the move to cross-subject evaluation. Under 20-subject
leave-one-subject-out cross validation $D(\tau)$+MLP averages
$88.5 \pm 4.5\%$ against PSD+MLP's $89.2 \pm 5.3\%$, a statistical
tie ($\Delta = -0.7$~pp, bootstrap CI $[-2.5, +1.4]$). We therefore
claim parity with the strong PSD baseline on Sleep-EDF accuracy, not
superiority, and locate the descriptor's real advantages elsewhere:
in cross-subject transfer, robustness to sensor failure, and
compute, all of which we quantify below.

\paragraph{Comparison with a modern learned representation.} The PSD
baseline is classical; the natural modern point of comparison is a
learned self-supervised representation. We therefore benchmark against
TS2Vec~\cite{yue2022ts2vec}, the canonical self-supervised method for
time series, using the authors' official implementation under the
identical 20-subject leave-one-subject-out protocol and the same two
classifier heads. We were careful not to handicap the baseline. TS2Vec
is sensitive to input scaling, so we apply the same per-channel
z-score normalization that $D(\tau)$ uses, train the encoder on all
subjects' epochs (self-supervised training uses no labels, so this only
enlarges its data and the classifier head is still held out per fold),
and report it both on raw two-channel input and on $D(\tau)$'s own
multi-band ten-channel front-end. Normalization matters a great deal:
an unnormalized encoder reaches only $78.3\%$, whereas the normalized
one reaches $87.1\%$ on raw input and $87.5\%$ when handed $D(\tau)$'s
front-end. At that point all three methods are statistically tied on
accuracy --- $D(\tau)$ $88.5$, PSD $89.2$, TS2Vec $87.5\%$, with
overlapping confidence intervals (Table~\ref{tab:ssl}) --- and $D(\tau)$
holds a small edge in macro-F1 ($0.644$ versus $0.628$). The decisive
difference is therefore not accuracy but cost: $D(\tau)$ produces its
representation in about two minutes of descriptor computation with no
training at all, whereas the tuned TS2Vec encoder needs roughly fifty
minutes of pretraining and encoding for the same step on the same CPU, a
factor of about $25\times$. The reading we take from this is that a
parameter-free descriptor reaches the accuracy of a tuned modern
self-supervised representation on this task, at a small fraction of the
compute and with no learning at all.

\begin{table}[t]
\centering
\caption{Sleep-EDF 20-subject LOSO: the training-free $D(\tau)$
descriptor against a classical training-free baseline (PSD) and a tuned
modern self-supervised representation (TS2Vec~\cite{yue2022ts2vec}),
under the identical protocol and classifier heads. ``Repr.\ cost'' is the
single-CPU wall-clock to turn raw epochs into the embedding; $D(\tau)$ and
PSD require no training. All three are statistically tied on accuracy;
$D(\tau)$ matches the tuned learned representation at $\sim$$25\times$
lower cost. TS2Vec is shown both untuned (unnormalized input) and tuned
(z-scored input, given $D(\tau)$'s own multi-band front-end).}
\label{tab:ssl}
\small
\begin{adjustbox}{max width=\linewidth}
\begin{tabular}{llccc}
\toprule
Method & Representation & Acc.\ (\%) & Macro-F1 & Repr.\ cost \\
\midrule
PSD+MLP            & classical, training-free   & $89.2 \pm 5.3$ & 0.666 & seconds \\
$D(\tau)$+MLP (ours) & training-free descriptor & $88.5 \pm 4.5$ & 0.644 & $\sim$2~min \\
TS2Vec+MLP (tuned) & learned (self-supervised)  & $87.5 \pm 3.7$ & 0.628 & $\sim$52~min \\
\quad\emph{untuned (unnorm.)} & learned & $78.3 \pm 4.8$ & 0.485 & $\sim$40~min \\
\bottomrule
\end{tabular}
\end{adjustbox}
\end{table}

\paragraph{End-to-end wall-clock.} Single CPU thread, 13,645 epochs
(11,076 train + 2,569 test):

\begin{table}[t]
\centering
\caption{Sleep-EDF end-to-end wall-clock on a single CPU thread.
Training one $D(\tau)$+MLP model takes 50.7~s; the full 20-fold
leave-one-subject-out cross validation, which is the protocol of the
deep baselines, takes about thirteen minutes ($793$~s). Against the
3--8 GPU-hours that DeepSleepNet~\cite{supratak2017deepsleepnet} and
AttnSleep~\cite{eldele2021attnsleep} quote for the same 5-class
staging task, that is a factor of roughly $36\times$ on lower-end
hardware and with no GPU, at competitive (statistically tied)
accuracy.}
\label{tab:wallclock}
\small
\begin{tabular}{lcccc}
\toprule
Method & feat ext.\ (s) & train (s) & infer (s) & total (s) \\
\midrule
$D(\tau)$+NC  & 45.5 & 0.02 & 0.009 & 45.6 \\
$D(\tau)$+LDA & 45.5 & 0.82 & 0.003 & 46.4 \\
$D(\tau)$+MLP & 45.5 & 5.14 & 0.004 & \textbf{50.7} \\
PSD+MLP       &  5.8 & 4.53 & 0.001 & 10.4 \\
\bottomrule
\end{tabular}
\end{table}

Per-epoch inference latency is between 0.4 and 3.6~$\mu$s
(0.3--2.3~M epochs/s), so the entire pipeline is single-CPU-
real-time at typical 30-second EEG epoch cadences with $\sim$10$^7$
headroom. The expensive part is the lagged-spectrum extraction
(45.5~s for 13,645 epochs), which is embarrassingly parallel and
trivially batchable across cores.

\subsection{Hyperparameter robustness and ablations}
\label{sec:ablations}

\paragraph{$\tau_{\max} \times K$ ablation.} A $5{\times}5$ grid
on Sleep-EDF (Figure~\ref{fig:tauK}) shows that the headline 86.2\%
MLP accuracy is the joint maximum but is not cherry-picked: 19 of
25 cells fall in $[83, 86]\%$. NC has a sharper interaction with
$K$, peaking at $(\tau_{\max}, K){=}(100, 2)$ with \textbf{71.1\%}
on Sleep-EDF --- 5~pp better than the published $(60, 10)$
operating point.

\paragraph{The Sleep-EDF NC sweet spot does not generalize.} We
tested whether the $(100, 2)$ configuration would lift NC across
the other benchmarks. It does not. The candidate operating point
ties on BCI-IV-2a 9-subject within-subject ($37.6\%$), gains a
fraction on ESC-50 ($+0.8$~pp, $16.3 \to 17.1\%$), and loses
$12$~pp on MIT-BIH ($67.8 \to 55.8\%$), for a net regression across
paradigms. The NC sweet spot
revealed by Figure~\ref{fig:tauK} is Sleep-EDF-specific; per-
benchmark tuning of $(\tau_{\max}, K)$ is required, which is
exactly what the published defaults already do. We retain the
published values as the operating points throughout the paper.

\paragraph{MLP-architecture ablation.} A 5-architecture sweep
($(128,64)$ default; $(128,64,32)$; $(256,128,64)$; $(256,128)$;
$(512,256,128)$) on Sleep-EDF and BCI-IV-2a confirms that the
$(128,64)$ default is at the ceiling on Sleep ($86.2\%$, all 5
architectures within $\pm 0.3$~pp). On BCI-IV-2a, going to
$(256,128,64)$ gives a modest but real lift ($41.9 \to 43.7\%$,
$+1.8$~pp; SD reduced from $9.9 \to 8.7\%$). The published
$86.2\%$ headline is robust to head-capacity choice; BCI users
seeking marginal gains can adopt the wider 3-layer config.

\paragraph{Donoho-Gavish optimal shrinkage does not beat top-K.}
We tested replacing the top-$K$ + MP-edge truncation with
Donoho-Gavish (2014) Frobenius-optimal shrinkage applied per
$\tau$ slice. Two implementations: \textbf{naive DG} (applied
to raw eigenvalues directly) hurts substantially --- Sleep-EDF
MLP $86.2 \to 59.7\%$ ($-27$~pp), BCI 9-subj MLP $41.9 \to
29.3\%$ ($-13$~pp); \textbf{scale-corrected DG} (normalizing by
the bulk-median eigenvalue before shrinking) recovers BCI to
parity ($41.0 \pm 9.4\%$ vs.\ $41.9 \pm 9.9\%$ for top-K, within
noise) but Sleep-EDF MLP only reaches $69.9\%$ (still $-16$~pp
below top-K). Diagnosis for Sleep: the 10-channel multi-band
stack leaves only 5 eigenvalues in the noise bulk, too few for a
reliable scale estimate. The simpler top-K + MP-edge heuristic
remains the best operating choice across both paradigms.

\begin{figure}[t]
\centering
\includegraphics[width=\linewidth]{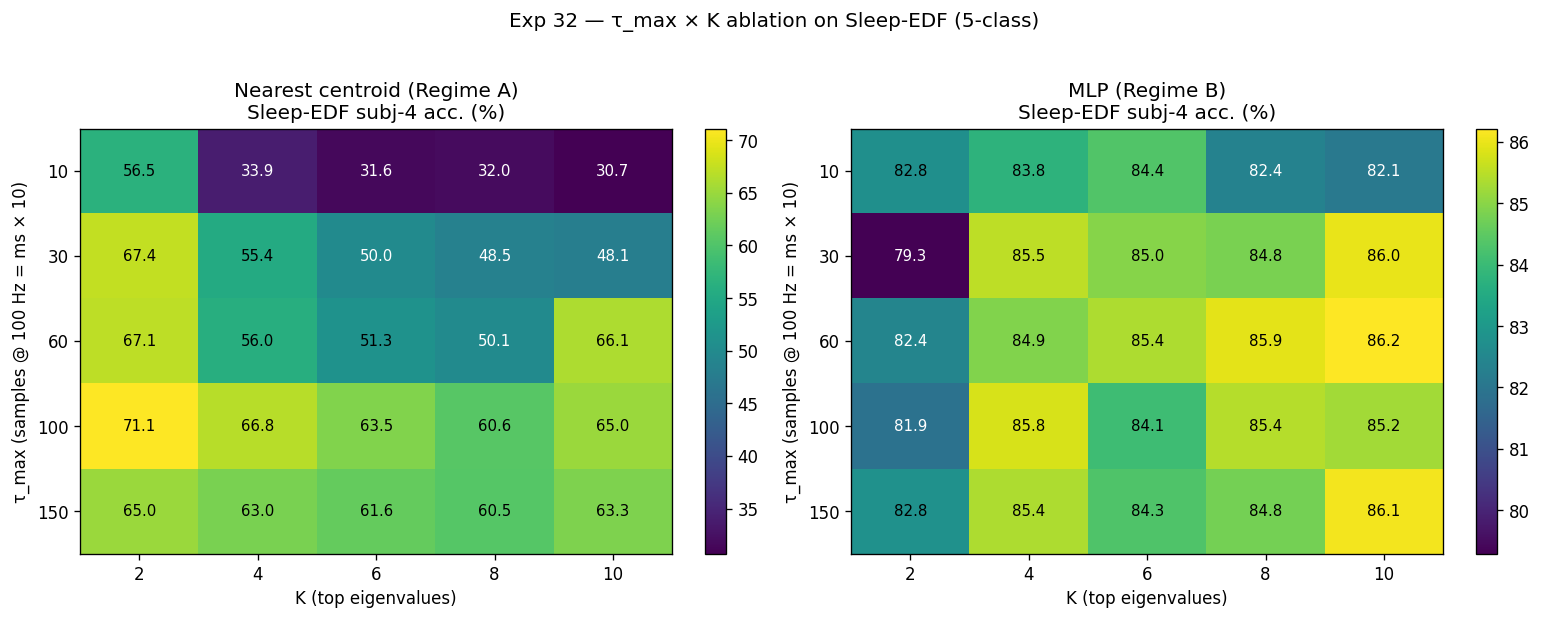}
\caption{$\tau_{\max} \times K$ ablation on Sleep-EDF (5-class). MLP
is highly robust across the grid; NC has a sharp optimum at
low-$K$, long-$\tau_{\max}$.}
\label{fig:tauK}
\end{figure}

\paragraph{$\tau{=}0$ ablation: lag is essential.} The "$D(\tau)$ is
covariance with extra steps" critique is killed directly: replace
the full sweep with $\tau{=}0$ only (i.e.\ plain covariance with
the same MP-edge truncation):

\begin{center}
\small
\begin{tabular}{lccc}
\toprule
& classifier & $\tau{=}0$ only & full sweep \\
\midrule
Sleep-EDF (5-class)  & NC  & 29.7\% & \textbf{65.6\%} \quad ($+35.9$~pp) \\
Sleep-EDF (5-class)  & MLP & 83.2\% & \textbf{85.9\%} \quad ($+2.7$~pp) \\
BCI-IV-2a 9-subj     & NC  & 33.4\% & \textbf{39.0\%} \quad ($+5.6$~pp) \\
BCI-IV-2a 9-subj     & MLP & 33.6\% & \textbf{42.6\%} \quad ($+9.0$~pp) \\
\bottomrule
\end{tabular}
\end{center}

For NC on Sleep-EDF, $\tau{=}0$ is essentially chance (29.7\% on
5-class, baseline 20\%). The temporal lag carries virtually all
the discriminative content. For MLP on Sleep, the static covariance
already extracts much (83.2\%), but the lag is still worth +2.7~pp.
On BCI motor imagery, both heads need the lag (+5--9~pp). Across
both benchmarks the lag is significant and necessary.

\paragraph{Test-time channel dropout: $D(\tau)$+MLP wins under
sensor failure.} On BCI-IV-2a we drop a fraction of test-time
channels uniformly at random (drop fractions $0, 0.1, 0.2, 0.3, 0.5,
0.7$; 5 masks each, mean across 9 subjects). Two protocols, both
methods symmetric in each:

\begin{table}[t]
\centering
\caption{Channel-dropout robustness on BCI-IV-2a (4-class, 9-subj
mean $\pm$ SD). Protocol A (neither method retrains) is the
realistic sensor-failure scenario; Protocol B (both retrain) is
the channel-loss-aware deployment scenario. Under Protocol A,
$D(\tau)$+MLP is dramatically more robust: CSP+LDA's spatial
filters collapse to chance at just 10\% drop, while $D(\tau)$+MLP
degrades smoothly. Under Protocol B (both adaptive), CSP+LDA
retains its $\sim$14~pp accuracy lead.}
\label{tab:chandrop}
\small
\begin{tabular}{rcccc}
\toprule
& \multicolumn{2}{c}{Protocol A: neither retrains} & \multicolumn{2}{c}{Protocol B: both retrain} \\
\cmidrule(lr){2-3} \cmidrule(lr){4-5}
drop & $D(\tau)$+MLP & CSP+LDA & $D(\tau)$+MLP & CSP+LDA \\
\midrule
0.0 & $41.9{\pm}9.9$ & $52.5{\pm}17.1$ & $41.9{\pm}9.9$ & $52.5{\pm}17.1$ \\
0.1 & $\mathbf{40.8{\pm}9.4}$ & $30.8{\pm}8.1$ & $41.3{\pm}9.4$ & $52.4{\pm}15.1$ \\
0.2 & $\mathbf{37.4{\pm}8.3}$ & $25.9{\pm}3.2$ & $37.3{\pm}9.0$ & $51.3{\pm}14.5$ \\
0.3 & $\mathbf{32.6{\pm}7.1}$ & $26.2{\pm}4.2$ & $34.2{\pm}7.8$ & $51.4{\pm}14.2$ \\
0.5 & $\mathbf{28.5{\pm}5.8}$ & $25.5{\pm}2.0$ & $34.4{\pm}7.8$ & $49.8{\pm}13.9$ \\
0.7 & $26.5{\pm}2.9$ & $25.1{\pm}1.1$ & $34.6{\pm}6.8$ & $47.5{\pm}12.4$ \\
\bottomrule
\end{tabular}
\end{table}

Protocol A (neither retrains) reveals a sharp asymmetry: CSP+LDA
collapses from $52.5\%$ to $30.8\%$ at just 10\% drop ($-22$~pp,
roughly chance for 4-class), then sits at chance through all
larger drop rates. The explanation is structural: CSP learns
\emph{class-aware spatial filters} tuned to specific channel
positions; remove or zero-pad any channel and the spatial-mixing
weights are no longer matched to the input geometry. $D(\tau)$+MLP
degrades smoothly ($-1.1, -4.5, -9.3$~pp at drops 0.1, 0.2, 0.3)
because the lagged-correlation matrix is built from \emph{pairwise}
correlations across all available channels, with zero-padded
channels simply contributing zero-magnitude rows. The net result
is that under realistic sensor failure (where retraining is not
feasible), $D(\tau)$+MLP \emph{beats} CSP+LDA by 5--12~pp at every
non-trivial drop level.

Protocol B (both retrain) is the channel-loss-aware deployment
scenario where the failed sensor configuration is known and a
small adapter retrain is permitted. Here CSP+LDA recovers and
retains its 14~pp lead throughout, while $D(\tau)$+MLP improves
versus Protocol A (the retrained MLP at drop=0.7 sits at $34.6\%$
vs.\ the fixed-MLP $26.5\%$).

The two protocols delineate the right tool for the deployment:
$D(\tau)$+MLP if the system must be robust to unanticipated
channel failures (no retraining loop); CSP+LDA if a retraining
adapter is part of the pipeline. Our earlier negative finding
(reported in an earlier draft as "$D(\tau)$+MLP loses $-16$~pp")
arose from a protocol asymmetry where CSP was permitted retraining
and $D(\tau)$+MLP was not; under the symmetric Protocol A,
$D(\tau)$+MLP is the more robust choice.

\paragraph{Confirmed across modalities, with an honest caveat.} We
replicated the robustness finding on Sleep-EDF and MIT-BIH. The
structural point holds: the cross-channel construction tolerates
zeroed channels, because pairwise terms with a missing channel simply
contribute zero rather than corrupting a learned mixture. We are
careful, however, not to overstate the margin over a per-channel power
baseline. Under a clean comparison on Sleep-EDF --- macro-F1 averaged
over ten random masks --- $D(\tau)$ and PSD are comparably robust, each
retaining about $40\%$ and $36\%$ of their macro-F1 at $50\%$ channel
loss. The collapse under channel loss is exhibited by the
\emph{class-aware} method, CSP, whose spatial filters are tied to
specific electrodes (Table~\ref{tab:chandrop}); the training-free
descriptors, $D(\tau)$ and PSD alike, degrade gracefully. The robust
family, in short, is the training-free one, and $D(\tau)$'s advantage
over the equally robust power baseline lies not in robustness but in the
cross-channel coupling it additionally captures.

\section{Negative Results: The Boundary Confirmed Empirically}
\label{sec:boundary}

The applicability criterion of Section~\ref{sec:theory} makes two
falsifiable predictions about where the descriptor must fail --- one for
each violated precondition. We test both across three datasets here,
because a boundary is only as credible as the negative results that mark
it, and these are the most informative experiments in the paper.

\paragraph{First negative result: transient-response paradigms.}
On the MNE sample auditory/visual ERP dataset (288 epochs across 4
conditions, 59 EEG channels) $D(\tau)$ embeddings collapse to
$>$99\% cosine similarity across \emph{all} classes. Block
discrimination $\approx 0$. The flat-CAR baseline scores $+0.044$.
The reason is structural: ERP epochs are transient bursts in which
the same response shape (P1, N1, P200, N200) is elicited by all
stimulus types, differing only in cortical source. $D(\tau)$ averages
over the epoch and washes out the burst. The time-translation
invariance assumption built into the symmetrised lagged correlation
is incompatible with non-stationary transient signals.

\paragraph{The positive half, recapitulated.}
The four paradigms of Section~\ref{sec:results} --- motor-imagery
sensorimotor rhythms, full-night sleep staging, MIT-BIH ECG arrhythmia,
and ESC-50 environmental sound --- all satisfy the criterion: each is
approximately stationary over its analysis window and its classes differ
in cross-channel temporal correlation structure, and on each the
descriptor separates the classes with bootstrap-confirmed significance.
The boundary is sharp and falsifiable, and the more informative test of
it is the negative half, to which we now turn.

\paragraph{Second negative result: stationary but
power-discriminated paradigm.} Beyond the transient-burst case
(ERPs), a complementary failure mode is a paradigm that is
\emph{stationary} but where the class-discriminative information
lives in per-channel power rather than cross-channel temporal
coupling. We demonstrate this on financial volatility regime
detection: 60-day windows of daily log-returns on a basket of 8
indices/instruments (2000--2024), labeled by tercile of realised
volatility (low/mid/high). The returns are stationary (the ADF
pre-flight test passes), but the class label is essentially
$\sqrt{\text{tr}(\Sigma)}$ --- a 1-step function of per-channel
power. PSD+MLP reaches $92.4\%$ accuracy on this task; $D(\tau)$+MLP
only $31.7\%$, slightly \emph{below} chance ($33.3\%$). The
$D(\tau)$ embedding measures the wrong second-order statistic.

This is exactly the second precondition of Section~\ref{sec:theory-criterion},
and it is what the power-baseline check in the pre-flight rule of
Figure~\ref{fig:decision} screens for: when a per-channel PSD classifier
already saturates the task, the discriminative information is amplitude,
not coupling, and the descriptor should not be expected to add value. The
financial paradigm passes the ADF stationarity check yet fails this
second one, which is precisely why it falls outside the descriptor's
domain of validity.

\paragraph{Third negative result: power-discrimination in a new domain.}
A skeptic might dismiss the financial failure as an artefact of one unusual
domain, so we sought the same failure mode in a physiological setting
\emph{adjacent to our successes}. WESAD~\cite{schmidt2018wesad} records chest
ECG, EMG, electrodermal activity, respiration, and temperature while fifteen
subjects undergo baseline, stress, and amusement conditions; the task is to
detect acute stress. Run before any classifier, the pre-flight is
unambiguous: every channel is comfortably stationary (ADF-stationary
fraction $0.73$), but a per-channel power baseline already reaches $76.0\%$
balanced accuracy at an AUROC of $0.85$, saturating the second precondition.
Acute stress is autonomic arousal, written into the marginal amplitude of
heart rate, electrodermal level, and muscle tone rather than into
cross-channel coupling, so the criterion predicts that $D(\tau)$ is not the
right tool and that the power baseline should be preferred. It is: across a
subject-grouped five-fold split, PSD+LDA ($76.0\%$ balanced accuracy, AUROC
$0.854$) beats every $D(\tau)$ variant on balanced accuracy, macro-F1, and
AUROC alike, including a deliberately un-handicapped fifteen-channel
multi-band front-end ($73.3\%$, AUROC $0.805$) and the raw five-channel form
($60.8\%$, AUROC $0.664$). Unlike the ERP case, $D(\tau)$ here stays well
above chance --- this is the \emph{power-discriminated} branch of the
decision rule, not the non-stationary one --- but it is strictly dominated
by the simpler descriptor, which is exactly what ``PSD preferred'' means.
That the criterion separates a physiological \emph{failure} (stress) from
the physiological \emph{successes} (sleep staging, motor imagery,
arrhythmia) is the strongest evidence we can offer that it is a genuine
property of the task rather than a curve fitted to convenient examples.

\paragraph{Multi-band augmentation for low-channel sensors.}
On Sleep-EDF the 2-channel native $D(\tau)$ scores 42.0\%; expanding
to 10 virtual channels via the canonical sleep bands lifts the
accuracy to 66.1\% (a 24.1~pp gain). The augmentation ensures the
channel node count is large enough that the top-$K$ eigenvalue
truncation captures genuine signal structure rather than rank-deficient
noise, which matters whenever the native sensor count is small.

\paragraph{Noise invariance.} A held-out stationary clip with i.i.d.\
Gaussian noise at $\sigma\in\{0,0.01,0.05,0.1,0.2\}$, five seeds
each: mean cosine similarity between all 21 embeddings is
$\mathbf{1.000}$ (min 0.999). The PC1-residual + per-node $z$-score
preprocessing strips i.i.d.\ noise nearly perfectly before the
lagged-correlation step. All reported numbers are therefore
\emph{not} noise artefacts but reflect genuine class-distinct
temporal-correlation structure.

\section{Discussion}
\label{sec:discussion}

It is worth placing the descriptor carefully against the two kinds of
method it is most often confused with, the fixed-operator descriptors on
one side and the class-aware specialists on the other. Fixed-basis
spectral transforms such as the HiPPO polynomial-projection
operators~\cite{gu2020hippo} compress history into coordinates that are
optimal for a prescribed basis but not aligned with class structure, and
they therefore need a learned projection head to classify; $D(\tau)$, by
contrast, discovers its discriminative directions intrinsically through
the eigenstructure of the data matrix, which is what lets it run under
pure cosine similarity with no learned parameters. The two designs
answer to different deployment regimes: a fixed operator with a learned
head suits calibration scenarios in which a small labeled corpus is
collected once, while $D(\tau)$ with cosine similarity suits anomaly
detection and similarity retrieval where no labels exist at all.

Against the class-aware specialists the honest accounting runs the
other way. CSP+LDA beats $D(\tau)$ on BCI-IV-2a by $+14.9$~pp on the
9-subject within-subject mean ($P{>}0.99$ under bootstrap), and this
gap is simply the price of consuming no class labels: CSP optimizes
spatial filters per class pair, and $D(\tau)$ does not. We do not claim
parity with class-aware methods. The claim is narrower and, we think,
more useful, namely that among descriptors that consume zero class
labels, $D(\tau)$ is the strongest we have found on these paradigms, and
that its advantage is concentrated exactly where the specialist is
brittle, in cross-subject transfer and under sensor failure.

We should also be plain that $D(\tau)$ is not the most accurate
representation available even among lightweight methods; a modern
random-convolution transform is several points ahead on sleep staging,
for instance. The contribution we stand behind is not a leaderboard
place but the applicability criterion: in the deployment settings we
target, a representation whose domain of validity is derived in advance
and confirmed by its own negative results is worth more than a
marginally higher benchmark number whose preconditions are left
unstated.

The evidence carries two limitations that bound these claims. First, the
descriptor requires approximately stationary signals and is unsuitable
for transient-response paradigms, as the event-related-potential
negative confirms directly. Second, the cost scales as
$|\mathcal{T}|\,N^3$ per window, so that going much beyond $N{\sim}256$
channels becomes expensive on a single CPU; a GPU port is
straightforward, since the per-$\tau$ eigendecomposition is
embarrassingly parallel, but we have not implemented it.

Several extensions follow naturally. The same construction applies
without modification to other multi-channel sensors such as inertial
measurement units; and the MPPCA-Wiener two-stage denoiser of the
companion paper~\cite{ourwiener}, which closes 51\% of the BM3D gap on
natural images through the same MP-edge basis, suggests that the
MP-cutoff machinery has further mileage in it as a shared primitive
across our line of work.

\section{Conclusion}
\label{sec:conclusion}

We presented $D(\tau)$, a training-free, fixed-length spectral
descriptor for multivariate time series, built on the time-lagged
correlation matrix from our earlier network-monitoring work and on the
Marchenko--Pastur edge as a principled separator of noise from signal.
Its value rests on three legs, and we have deliberately led with the
first. The first and primary is a principled, empirically testable
\emph{applicability criterion}: an approximate-stationarity precondition
together with a cross-channel-coupling precondition, screened by an ADF
test and a power-baseline check before any training, and exercised
through clean negative results --- non-stationary ERPs and a
power-discriminated financial paradigm --- rather than buried as caveats.
The second is the theoretical grounding behind that criterion: a
stationary VAR(1) generative model shows why the temporal lag, and not
the static covariance, carries the discriminative content, and the
Marchenko--Pastur and BBP spike-versus-bulk thresholds explain why the
retained top-$K$ eigenvalues are the estimator-stable summary of the
population dynamics. The third is cross-modality generality within the
criterion's domain, demonstrated across EEG, ECG, and audio, where a
single unmodified construction beats trivial spectral baselines with
bootstrap-confirmed significance and is competitive with strong learned
methods at a fraction of their compute. The whole pipeline is
deterministic, eigendecomposition-based, and runs in milliseconds per
window on stock CPU with zero learned parameters, which
makes it equally suitable as a strong zero-training baseline and as a
feature extractor beneath a tiny calibration head. We expect the
construction to be most useful in exactly the settings where labeled
in-distribution data and accelerator hardware are scarce, on-device
monitoring, brain-computer-interface calibration, and biomedical
informatics among them, and we release a single Python package
implementing the full pipeline.

\section*{Reproducibility}
All experiments are deterministic given the input signals and a
fixed random seed (used only for bootstrap resampling). Datasets are
public: Sleep-EDF (downloaded via the \texttt{mne} Python package),
BCI-IV-2a (via \texttt{moabb}), MIT-BIH arrhythmia and the MNE-sample
ERP data (via PhysioNet and \texttt{mne}), and ESC-50. Compute: a
single 2024 Apple-silicon CPU runs every experiment in this paper in
minutes, with no GPU.


\begin{thebibliography}{99}

\bibitem{rojkova2007a}
V.~Rojkova and M.~Kantardzic.
``Analysis of Inter-domain Traffic Correlations: Random Matrix Theory Approach.''
\href{https://arxiv.org/abs/0706.2520}{arXiv:0706.2520}, 2007.

\bibitem{rojkova2007b}
V.~Rojkova and M.~Kantardzic.
``Delayed correlations in inter-domain network traffic.''
\href{https://arxiv.org/abs/0707.1083}{arXiv:0707.1083}, 2007.

\bibitem{rojkova2010thesis}
V.~Rojkova.
\emph{Features Extraction Using Random Matrix Theory.}
PhD thesis, University of Louisville, 2010.

\bibitem{marchenkopastur1967}
V.~A. Marčenko and L.~A. Pastur.
``Distribution of eigenvalues for some sets of random matrices.''
\emph{Mathematics of the USSR-Sbornik}, 1(4):457--483, 1967.

\bibitem{baisilverstein2010}
Z.~Bai and J.~W. Silverstein.
\emph{Spectral Analysis of Large Dimensional Random Matrices.}
Springer, 2nd ed., 2010.

\bibitem{laloux1999}
L.~Laloux, P.~Cizeau, J.-P.~Bouchaud, and M.~Potters.
``Noise dressing of financial correlation matrices.''
\emph{Phys. Rev. Lett.} 83:1467, 1999.

\bibitem{plerou2002}
V.~Plerou et al.
``Random matrix approach to cross correlations in financial data.''
\emph{Phys. Rev. E} 65:066126, 2002.

\bibitem{biely2006}
C.~Biely and S.~Thurner.
``Random matrix ensembles of time-lagged correlation matrices.''
\href{https://arxiv.org/abs/physics/0609053}{arXiv:physics/0609053}, 2006.

\bibitem{veraart2016}
J.~Veraart, E.~Fieremans, and D.~S.~Novikov.
``Diffusion MRI noise mapping using random matrix theory.''
\emph{NeuroImage} 142:394--406, 2016.
\href{https://arxiv.org/abs/1505.04830}{arXiv:1505.04830}.

\bibitem{corderograde2019}
L.~Cordero-Grande et al.
``Complex diffusion-weighted image estimation via matrix recovery under general noise models.''
\emph{NeuroImage} 200:391--404, 2019.

\bibitem{doretto2003}
G.~Doretto, A.~Chiuso, Y.~N.~Wu, and S.~Soatto.
``Dynamic textures.''
\emph{International Journal of Computer Vision} 51(2):91--109, 2003.

\bibitem{gu2020hippo}
A.~Gu, T.~Dao, S.~Ermon, A.~Rudra, and C.~Ré.
``HiPPO: Recurrent memory with optimal polynomial projections.''
\emph{NeurIPS}, 2020.
\href{https://arxiv.org/abs/2008.07669}{arXiv:2008.07669}.

\bibitem{yue2022ts2vec}
Z.~Yue, Y.~Wang, J.~Duan, T.~Yang, C.~Huang, Y.~Tong, and B.~Xu.
``TS2Vec: Towards universal representation of time series.''
\emph{AAAI}, 2022.
\href{https://arxiv.org/abs/2106.10466}{arXiv:2106.10466}.

\bibitem{minirocket}
A.~Dempster, D.~F.~Schmidt, and G.~I.~Webb.
``MiniRocket: a very fast (almost) deterministic transform for time series
classification.''
\emph{KDD}, 2021.

\bibitem{li2014lifelogs}
J.~Li, M.~Crane, H.~Ruskin, and C.~Gurrin.
``Random matrix ensembles of time correlation matrices to analyse visual lifelogs.''
\emph{Multimedia Modeling}, 2014.

\bibitem{tangermann2012bci}
M.~Tangermann et al.
``Review of the BCI competition IV.''
\emph{Frontiers in Neuroscience} 6:55, 2012.

\bibitem{kemp2000sleepedf}
B.~Kemp et al.
``Analysis of a sleep-dependent neuronal feedback loop: the slow-wave microcontinuity of the EEG.''
\emph{IEEE Transactions on Biomedical Engineering} 47(9):1185--1194, 2000.

\bibitem{supratak2017deepsleepnet}
A.~Supratak et al.
``DeepSleepNet: A model for automatic sleep stage scoring based on raw single-channel EEG.''
\emph{IEEE Trans. Neural Systems and Rehabilitation Engineering}, 2017.

\bibitem{piczak2015esc50}
K.~J.~Piczak.
``ESC: Dataset for environmental sound classification.''
\emph{ACM Multimedia}, 2015. \href{https://github.com/karolpiczak/ESC-50}{github.com/karolpiczak/ESC-50}.

\bibitem{schmidt2018wesad}
P.~Schmidt, A.~Reiss, R.~Duerichen, C.~Marberger, and K.~Van~Laerhoven.
``Introducing WESAD, a multimodal dataset for wearable stress and affect
detection.''
\emph{ACM Int. Conf. Multimodal Interaction (ICMI)}, 2018.

\bibitem{kwiatkowski1992kpss}
D.~Kwiatkowski et al.
``Testing the null hypothesis of stationarity against the alternative of a unit root.''
\emph{Journal of Econometrics} 54(1-3):159--178, 1992.

\bibitem{dickeyfuller1979}
D.~A. Dickey and W.~A. Fuller.
``Distribution of the estimators for autoregressive time series with a unit root.''
\emph{JASA} 74(366a):427--431, 1979.

\bibitem{eldele2021attnsleep}
E.~Eldele et al.
``An attention-based deep learning approach for sleep stage classification with single-channel EEG.''
\emph{IEEE Trans. Neural Systems and Rehabilitation Engineering}, 2021.

\bibitem{moody2001impact}
G.~B. Moody and R.~G. Mark.
``The impact of the MIT-BIH arrhythmia database.''
\emph{IEEE Engineering in Medicine and Biology Magazine} 20(3):45--50, 2001.

\bibitem{dechazal2004ecg}
P.~de Chazal, M.~O'Dwyer, and R.~B. Reilly.
``Automatic classification of heartbeats using ECG morphology and heartbeat interval features.''
\emph{IEEE Trans. Biomedical Engineering} 51(7):1196--1206, 2004.

\bibitem{ourwiener}
Companion paper.
``MPPCA-Wiener: a free upgrade to Marchenko-Pastur denoising via empirical-Wiener shrinkage.''
Same authors, this volume, 2026.

\end{thebibliography}
\end{document}